\documentclass{article}

\usepackage{iclr2027_conference}

\iclrfinalcopy
\usepackage{times}

\usepackage[utf8]{inputenc}
\usepackage[T1]{fontenc}

\usepackage{microtype}
\usepackage{graphicx}
\usepackage{subcaption}
\usepackage{booktabs}
\usepackage{multirow}
\usepackage{wrapfig}
\usepackage{amsmath,amssymb,amsfonts,amsthm,mathtools,bm}
\usepackage{nicefrac}
\usepackage{textcomp}
\usepackage{url}
\usepackage{xcolor}
\usepackage{hyperref}
\usepackage{enumitem}

\usepackage{wrapfig}
\usepackage{booktabs}
\usepackage{multirow}
\usepackage[table]{xcolor}
\usepackage{graphicx}

\usepackage{algorithm}
\usepackage{algpseudocode}
\newcommand{\clip}{\operatorname{clip}}
\newcommand{\LogitNormal}{\operatorname{LogitNormal}}

\usepackage{mathtools}
\usepackage{microtype}

\newcommand{\norm}[1]{\left\lVert #1\right\rVert}

\usepackage[most]{tcolorbox}
\tcbuselibrary{theorems,skins,breakable}
\definecolor{ash}{HTML}{E8E8E8}
\definecolor{Blue}{rgb}{0,0,1}

\usepackage{amsmath,amsfonts,bm}

\def\eqref#1{equation~\ref{#1}}
\def\1{\bm{1}}

\DeclareMathAlphabet{\mathsfit}{\encodingdefault}{\sfdefault}{m}{sl}
\SetMathAlphabet{\mathsfit}{bold}{\encodingdefault}{\sfdefault}{bx}{n}

\newcommand{\E}{\mathbb{E}}

\hypersetup{
  colorlinks=true,
  linkcolor=blue!60!black,
  citecolor=blue!60!black,
  urlcolor=blue!60!black,
}

\newtheoremstyle{customthm}
  {12pt}{12pt}{\itshape}{0pt}{\bfseries}{.}{.5em}{}
\theoremstyle{customthm}
\newtheorem{theorem}{Theorem}

\newtheorem{remark}[theorem]{Remark}

\newtheoremstyle{customdef}
  {8pt}{8pt}{\normalfont}{0pt}{\bfseries}{.}{.5em}{}
\theoremstyle{customdef}

\newtcolorbox{intuition}{
  colback=blue!3!white,
  colframe=blue!40!black,
  fonttitle=\bfseries,
  title=Intuition,
  breakable,
  left=6pt, right=6pt, top=4pt, bottom=4pt,
  boxrule=0.5pt,
  arc=2pt,
}

\newtcolorbox{proofsketchbox}{
  colback=gray!4!white,
  colframe=gray!50!black,
  fonttitle=\bfseries,
  title=Proof Sketch,
  breakable,
  left=6pt, right=6pt, top=4pt, bottom=4pt,
  boxrule=0.4pt,
  arc=2pt,
}

\providecommand{\1}{\mathbf{1}}

\newcommand{\sg}{\operatorname{sg}}

\title{SALD: Self-Referenced Advantage Learning for Diffusion Models}

\author{%
\textbf{Aryan Das}$^{1}$ \quad
\textbf{Surjo Dey}$^{2}$ \quad
\textbf{Koushik Biswas}$^{3}$ \quad
\textbf{Swalpa Kumar Roy}$^{4}$
\\[3pt]
\textbf{Moloud Abdar}$^{5}$ \quad
\textbf{Arnab Bhattacharya}$^{1}$ \quad
\textbf{Vinay Kumar Verma}$^{1}$
\\[7pt]
\small
$^{1}$Indian Institute of Technology Kanpur, India\\
$^{2}$Rajiv Gandhi Institute of Petroleum Technology, India\\
$^{3}$IIIT Delhi, India\\
$^{4}$Tezpur University, India\\
$^{5}$The University of Queensland, Australia
}

\begin{document}

\maketitle

\vspace*{-12pt}

% Sections
%%%%%%%%%%%%%%%%%%%%%%%%%%%%%%%%%%%%%%%%%%%%%%%%%%%%%%%%%%%%%%%%
\begin{abstract}
Recent work on language-model adaptation has shown that single models can obtain informative training signals by evaluating their behavior in demonstration- or feedback-augmented contexts, with the help of a teacher network, which is driven by the students' learned parameters. Inspired by this internal-reference principle, we investigate how diffusion models can identify self-referenced training signals without external demonstrations or teacher networks. We introduce SALD, a self-referenced training framework that evaluates each image-caption pair at two noise levels using the same model. The easier, lower-noise path is evaluated without gradient tracking to provide a reference, while the harder, higher-noise path provides the training gradient. Rather than directly distilling the easy-path prediction, SALD uses the difference between two path errors to adapt the hard-path objective. The proposed Advantage-Guided Diffusion (AGD) converts this relative error into a differentiable sample-level weight. Temporal Advantage Memory (TAM) accumulates relative difficulty across training and adapts the future gap between the two noise levels. Spectral Advantage Decomposition (SAD) further compares the residual power spectra of the two paths and constructs a differentiable, frequency-derived latent-element weight. All components share a single set of model parameters, requiring neither an external teacher network nor additional trainable parameters during training or inference, and no modification to the inference procedure. Experiments across multiple architectures and datasets demonstrate consistent improvements in generation quality, while component-wise ablations quantify the contributions of the proposed components.

\end{abstract}
%%%%%%%%%%%%%%%%%%%%%%%%%%%%%%%%%%%%%%%%%%%%%%%%%%%%%%%%%%%%%%%%
%%%%%%%%%%%%%%%%%%%%%%%%%%%%%%%%%%%%%%%%%%%%%%%%%%%%%%%%%%%%%%%%
\begin{figure}[H]
    \vspace{-6mm}
    \centering
    \includegraphics[width=0.82\linewidth]{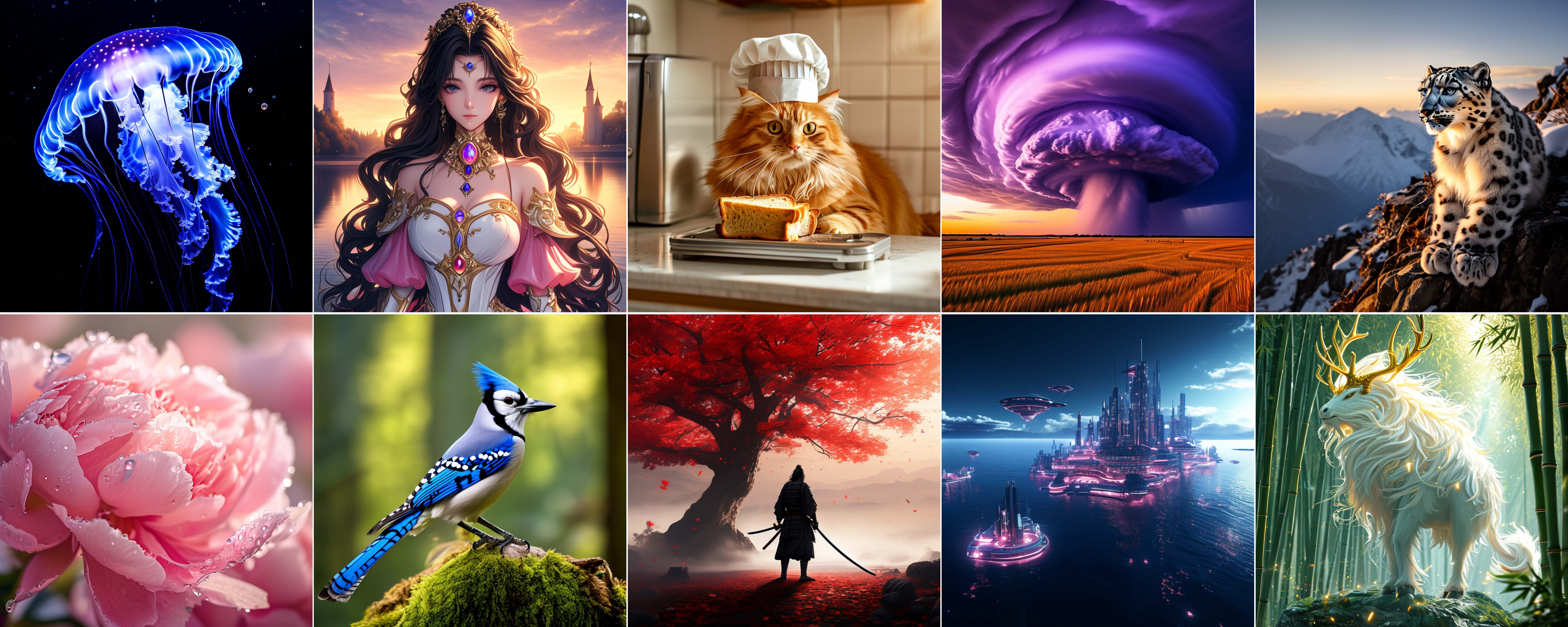} % 0.82
    \vspace{-2mm}
    \caption{Images generated by SALD using the SANA 0.6B model at a resolution of $1024 \times 1024$.}
    \label{fig:abstract}
    \vspace{-4mm}
\end{figure}

\section{Introduction}
\label{sec:intro}
\vspace{-3mm}

Diffusion \cite{ho2020denoising,song2021scorebased} and flow-matching
\cite{lipman2023flow,albergo2023stochastic} based generative models have become
standard backbones for high-fidelity image synthesis, with Diffusion
Transformers further improving scalability and generation quality
\cite{peebles2023scalable}.  In parallel, recent work on language-model
adaptation has shown that a model can obtain richer online training signals by
evaluating its behavior under additional information.  Self-Distillation
Fine-Tuning conditions a self-teacher on expert demonstrations, whereas
Self-Distillation Policy Optimization conditions a self-teacher on feedback
obtained from the model's own attempts
\cite{shenfeld2026selfdistillation,hubotter2026reinforcement}.  These approaches
show that useful supervision can be constructed without relying on a stronger
external teacher.  Transferring this principle to diffusion training is not
straightforward: diffusion models do not naturally receive textual
demonstrations or retrospective feedback, and their training states are sampled
from a prescribed corruption process rather than from autoregressive policy
rollouts.  Moreover, standard diffusion objectives still average gradients
uniformly across training pairs and latent elements, although their difficulty
varies substantially across images, captions, noise levels, and visual details
\cite{vincent2011score,choi2022perception}.  Neural networks also exhibit
spectral bias, learning coarse low-frequency structure before fine
high-frequency detail \cite{rahaman2019spectral,wang2025spectral}.  Consequently,
uniform training can continue to emphasize already learned signals while
underweighting examples and visual components that remain difficult.
\vspace{-2mm}
\begin{tcolorbox}[
enhanced,
colback=yellow!8!white,
colframe=black,
boxrule=0.7pt,
arc=2mm,
width=\textwidth,
top=1.5mm,
bottom=1.5mm,
left=3mm,
right=3mm,
% before skip=6pt,
% after skip=6pt
]
\centering
\itshape\bfseries
Can a diffusion model construct its own easier reference and use it to adapt training across harder examples, prompts, noise levels, and visual frequencies,
without teachers, demonstrations, or feedback datasets?
\end{tcolorbox}

A unified answer remains missing because existing methods address only parts of
this problem.  Noise-level weighting
\cite{karras2022elucidating,esser2024scaling,hang2024noise}
adjusts training according to timestep or signal-to-noise ratio, but normally
applies the same rule to all examples evaluated at a given noise level. Noise
selection methods show that different noise instances provide unequal training
signals \cite{chen2024noises}, but they do not maintain a persistent estimate
of how the difficulty of an individual image-caption pair changes throughout
training.  Curriculum learning \cite{bengio2009curriculum} orders examples by
difficulty, but typically depends on a fixed or externally defined curriculum
and are not adaptive noise selection. Distributionally robust optimization emphasizes difficult subpopulations
\cite{sagawa2020dro}, whereas frequency-aware objectives address spectral
imbalance; neither directly combines the sample-level, temporal, and spectral
difficulty in diffusion training.  Likewise, demonstration- or
feedback-conditioned language-model self-distillation cannot be transferred
directly because diffusion training provides neither an in-context
demonstration, nor a feedback-conditioned output distribution to distill
\cite{shenfeld2026selfdistillation,hubotter2026reinforcement}.  These limitations
motivate a self-referenced diffusion objective in which an easier view of the
same training pair provides a relative difficulty signal rather than a teacher
prediction target.

We introduce SALD, which evaluates every
image-caption pair at two noise levels using one shared model: an easier
low-noise path and a harder high-noise path.  The easy path is evaluated without
gradient tracking and serves as an internal reference, while model parameters
are updated through the hard path.  SALD does not directly distill the easy-path
prediction into the hard path.  Instead, it measures the increase in weighted
prediction error from the easy path to the hard path and uses this relative
signal to adapt the hard-path objective.  \emph{Advantage-Guided Diffusion}
(AGD) converts the scalar error difference into a normalized, differentiable
sample weight, emphasizing training pairs whose error grows under the harder
condition.  \emph{Temporal Advantage Memory} (TAM) maintains an exponential
moving average of relative difficulty for every training pair and uses it to
adapt future noise-level gaps, forming a continuous per-example curriculum
\cite{bengio2009curriculum}.  \emph{Spectral Advantage Decomposition} (SAD)
compares the residual power spectra of the two paths and constructs a
differentiable, frequency-derived latent-element weight, providing a finer
training signal related to spectral bias
\cite{rahaman2019spectral,wang2025spectral}.  All components reuse the same set
of model parameters and require neither a teacher network nor
additional trainable model parameters during training or inference.  The
method introduces no change to the inference procedure.  Our contributions are
summarized as follows:
\vspace{-2mm}
\begin{itemize}[leftmargin=*]
    \item We introduce AGD, a self-referenced weighting mechanism that estimates
    sample importance from the relative error between hard and easy noise paths,
    without any teachers, demonstration datasets, or architectural changes.

    \item We introduce TAM and SAD to extend relative difficulty across training
    history and spectral structure, enabling an adaptive per-example noise
    curriculum and frequency-derived, latent-element weighting.

    \item We characterize the optimization behavior of the resulting
    differentiable weighting objective and evaluate SALD across multiple
    architectures and datasets.  The experiments demonstrate consistent gains
    in generation quality, while component-wise ablations quantify the
    contributions of AGD, TAM, and SAD.
\end{itemize}

% \vspace{-7mm}
\section{Proposed Approach}
% \vspace{-2mm}
\label{sec:problem}
The same diffusion model evaluates each image-caption pair at two noise levels. The lower-noise evaluation forms an \emph{easy path}, which is computed without gradient tracking and serves as an internal reference. The higher-noise evaluation forms a \emph{hard path}, through which the model parameters are updated. By comparing these paths, SALD addresses three questions: \emph{(i) Which examples are currently difficult? (ii) How large should the easy-to-hard noise gap be for each example? and (iii) At which frequencies does the hard-path residual remain large?} Advantage-Guided Diffusion (AGD), Temporal Advantage Memory (TAM), and Spectral Advantage Decomposition (SAD) address these questions, respectively. All components share the same model parameters; therefore, SALD requires neither an external teacher nor an additional trainable model during training or inference.

%%%%%%%%%%%%%%%%%%%%%%%%%%%%%%%%%%%%
\begin{figure*}[t]
    % \vspace{-5mm}
    \centering
    \includegraphics[width=0.99\textwidth]{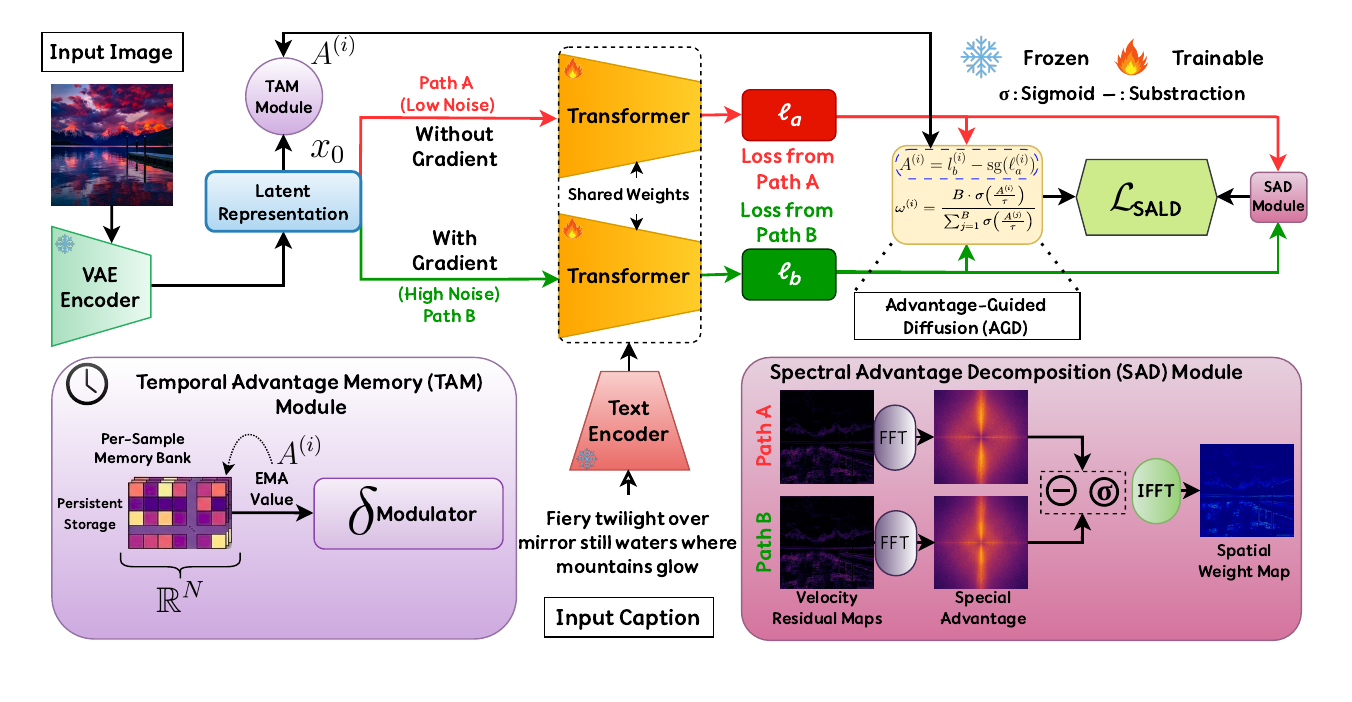} % height= 6cm, 
    % \vspace{-6mm}
    \caption{\emph{SALD framework.} A frozen VAE encodes the input latent, processed through two paths: Low-Noise Path~A (w/o gradient) and High-Noise Path~B (with gradient). AGD computes an advantage for sample reweighting, while TAM stabilizes it via per-sample EMA, forming a persistent curriculum across timestep gaps. SAD compares FFT power spectra of velocity residual maps, applies a sigmoid gate, and reconstructs a spatial weight map via IFFT to focus gradients on high-frequency error regions. Together, these define $\mathcal{L}_{\text{SALD}}$ for adaptive gradient allocation across samples.}
    % }
    \label{fig:model}
    \vspace{-4mm}
\end{figure*}
%%%%%%%%%%%%%%%%%%%%%%%%%%%%%%%%%%%%

\vspace{-2mm}
%%%%%%%%%%%%%%%%%%%%%%%%%%%%%%
\subsection{Flow-Matching Setup}
\label{sec:flow-setup}

Let $x_0\in\mathbb{R}^{C\times H\times W}$ be the latent representation of an
image, let $c$ be its text condition, and let
$\epsilon\sim\mathcal{N}(0,I)$ be Gaussian noise.  At noise level
$t\in[0,1]$, the model input is
\begin{equation}
    z_t=(1-t)x_0+t\epsilon.
    \label{eq:flow-path}
\end{equation}
The desired velocity is simply $u=\epsilon-x_0$.
% \begin{equation}
%     u=\epsilon-x_0.
%     \label{eq:velocity-target}
% \end{equation}
The model $v_\theta(z_t,t,c)$ predicts this velocity.  To keep the remaining
notation short, define the residual (velocity prediction error) and its scalar loss by
\begin{align}
    r(t,\epsilon)
    &=v_\theta(z_t,t,c)-(\epsilon-x_0),
    \label{eq:residual-general}
    \\
    \ell(t,\epsilon)
    &=\frac{w(t)}{D}\left\|r(t,\epsilon)\right\|_2^2,
    \qquad D=CHW,
    \label{eq:loss-general}
\end{align}
where $w(t)$ is the usual time-dependent loss weight. SALD now chooses two noise levels for the same training samples: $t_{\mathrm e}<t_{\mathrm h},$
% \begin{equation}
%     t_{\mathrm e}<t_{\mathrm h},
%     \label{eq:easy-hard-times}
% \end{equation}
where $t_{\mathrm e}$ is the easy level and $t_{\mathrm h}$ is the hard level.
The two paths use independent noise samples $\epsilon_{\mathrm e}$ and
$\epsilon_{\mathrm h}$.  We write
\begin{align}
    r_{\mathrm e}&=r(t_{\mathrm e},\epsilon_{\mathrm e}),
    &\ell_{\mathrm e}&=\ell(t_{\mathrm e},\epsilon_{\mathrm e}),
    \\
    r_{\mathrm h}&=r(t_{\mathrm h},\epsilon_{\mathrm h}),
    &\ell_{\mathrm h}&=\ell(t_{\mathrm h},\epsilon_{\mathrm h}).
    \label{eq:easy-hard-quantities}
\end{align}
The easy quantities are computed without gradient tracking, whereas the hard quantities remain differentiable. We choose the two noise levels such that: $t_{\mathrm e}^{(i)} < t_{\mathrm h}^{(i)}$, where \(t_{\mathrm e}^{(i)}\) and \(t_{\mathrm h}^{(i)}\) denote the easy and hard noise levels for training sample \(i\), respectively. Their separation is controlled by a noise-level gap \(\delta_i\) (defined in Eq. \ref{eq:tam-gap}). A small \(\delta_i\) produces two similar noise states, whereas a large \(\delta_i\) creates a more difficult hard state. Temporal Advantage Memory (TAM) later adapts \(\delta_i\) according to the historical difficulty of each training pair. In implementation, \(\delta_i\) corresponds to the number of steps separating the two levels in the discretized noise schedule.

\vspace{-2mm}
 %%%%%%%%%%%%%%%%%%%%%%%%%%%%%%%%%%%%
\subsection{Advantage-Guided Diffusion (AGD)}
\label{sec:agd}
\vspace{-2mm}
For each sample $i$, AGD measures how much the loss increases from the easy path to the hard path:
\begin{equation}
    A_i
    =\ell_{\mathrm h,i}
    -\sg\!\left(\ell_{\mathrm e,i}\right).
    \label{eq:advantage}
\end{equation}
Here, $\sg(\cdot)$ means stop-gradient.  Importantly, only the easy reference
is stopped. \emph{This is an approximation to obtain the demonstration or feedback sample, which is difficult to obtain for self-distillation ~\cite{shenfeld2026selfdistillation,hubotter2026reinforcement} in image generations}.  The advantage itself is \emph{not} detached, because it backpropagates the gradient for the hard path as: $ \nabla_\theta A_i=\nabla_\theta\ell_{\mathrm h,i}$
% \begin{equation}
%     \nabla_\theta A_i=\nabla_\theta\ell_{\mathrm h,i}.
%     \label{eq:advantage-gradient}
% \end{equation}

A large positive $A_i$ means that pair $i$ becomes considerably more difficult
at the higher noise level. Further, $A_i$ is normalized into a positive mini-batch weight.  For a batch of size
$B$, define
\begin{equation}
    q_i=\operatorname{sigmoid}(A_i/\tau),
    \qquad
    \omega_i=\frac{Bq_i}{\sum_{j=1}^{B}q_j},
    \label{eq:agd-weight}
\end{equation}
where $\tau>0$ is a temperature.  These weights satisfy
\begin{equation}
    \frac{1}{B}\sum_{i=1}^{B}\omega_i=1.
    \label{eq:weight-normalization}
\end{equation}
Thus, AGD changes the relative importance of examples without altering the average batch weight. Since $A_i$ is differentiable, $\omega_i$ is also differentiable through the hard path. The denominator in Eq.~\eqref{eq:agd-weight} further couples the weights of all examples in the batch. Harder samples therefore receive larger weights, and optimizing them encourages their losses to move closer to those of easier samples, enabling the model to generate outputs that progressively reduce the loss on difficult examples.

%%%%%%%%%%%%%%%%%%%%%%%%%%%%%%%%%%%%%%%%%%%%%%%%%%%%%%%%%%%%%%%%%%%%%
\vspace{-2mm}
\subsection{Temporal Advantage Memory (TAM)}
\label{sec:tam}

A single advantage value can be noisy because new Gaussian noise is sampled at
every iteration.  TAM therefore stores one running difficulty value $m_i$ for
each training sample.  The memory vector $m\in\mathbb{R}^{N}$ is initialized to
zero and is a non-trainable buffer.

Let $\Delta$ be the largest allowed gap $s$ be the current training
step, and $K>1$ be the number of curriculum steps.  A base curriculum that
starts at one and gradually reaches $\Delta$ is
\begin{equation}
    \delta_{\mathrm{base}}(s)=
    \begin{cases}
        1+\left\lfloor
        \dfrac{(\Delta-1)s}{K-1}
        \right\rfloor,
        &0\leq s<K,\\[7pt]
        d,\quad d\sim
        \operatorname{Uniform}\{1,\ldots,\Delta_{\max}\},
        &s\geq K.
    \end{cases}
    \label{eq:base-curriculum}
\end{equation}
Before the current pair is evaluated, its stored memory selects the actual gap:
\begin{equation}
    \delta_i=\operatorname{clip}\!\left(
        \operatorname{round}\!\left[
            \delta_{\mathrm{base}}(s)
            \bigl(1+\lambda\tanh(m_i)\bigr)
        \right],
        1,\Delta_{\max}
    \right),
    \label{eq:tam-gap}
\end{equation}
where $\lambda\geq0$ controls the strength of memory modulation. After the current advantage has been computed, TAM updates the memory:
\begin{equation}
    m_i\leftarrow\alpha m_i+(1-\alpha)A_i,
    \qquad 0<\alpha<1.
    \label{eq:tam-update}
\end{equation}
This assignment stores the numerical value of $A_i$; it does not create a
gradient path through the memory buffer.  The updated value is used the next
time the same pair is visited.  Persistent positive memory increases future
noise gaps, while negative memory decreases them.  The $\tanh$ and clipping
operations keep every gap within the valid range $(1-\lambda, 1+\lambda)$.

%%%%%%%%%%%%%%%%%%%%%%%%%%%%%%%%%%%%%%%%%%%%%%%%%%%%%%%%%%%%%%%%%%%%%%%
\vspace{-2mm}
\subsection{Spectral Advantage Decomposition (SAD)}
\label{sec:sad}
\vspace{-2mm}
The proposed AGD assigns a single scalar weight to an entire example at each frequency.  SAD adds frequency
information by comparing the residual maps $r_{\mathrm e,i}$ and
$r_{\mathrm h,i}$ in the frequency domain for the easy and hard respectively. Let $\mathcal{F}_r$ be an orthonormal two-dimensional real
FFT applied independently to every latent channel.  The easy and hard residual
power spectra are
\begin{equation}
    P_{\mathrm e,i}
    =\left|\mathcal{F}_r(r_{\mathrm e,i})\right|^{\odot2},
    \qquad
    P_{\mathrm h,i}
    =\left|\mathcal{F}_r(r_{\mathrm h,i})\right|^{\odot2}.
    \label{eq:power-spectra}
\end{equation}
Their spectral advantage is
\begin{equation}
    S_i=P_{\mathrm h,i}-\sg\!\left(P_{\mathrm e,i}\right).
    \label{eq:spectral-advantage}
\end{equation}
Again, only the easy reference is stopped.  Therefore, $S_i$ remains
differentiable through the hard residual.  A positive value of $S_i$ at a
frequency means that the hard path contains more residual energy than the easy
reference at that frequency.

The spectral advantage is mapped to a bounded gate; the gate is then back-projected to the latent domain and normalized:
\begin{align}
    G_i & =\operatorname{sigmoid}(S_i/\tau_s) \quad \& \quad
    \widetilde{M}_i
    =\left|\mathcal{F}_r^{-1}(G_i)\right|, \quad \tau_s>0
    \label{eq:unnormalized-mask}
    \\
    M_i
    &=\frac{D\,\widetilde{M}_i}
    {\sum_{p=1}^{D}\widetilde{M}_{i,p}+\varepsilon_m},
    \qquad \varepsilon_m=10^{-8}.
    \label{eq:normalized-mask}
\end{align}
The normalization keeps the mean weight approximately equal to one.
Crucially, there is no stop-gradient in Eq.~\ref{eq:normalized-mask}; $M_i$
is differentiable through $P_{\mathrm h,i}$.

\(M_i\) is a latent-domain weight obtained from the frequency-wise comparison of the easy and hard residuals. Because SAD uses residual power spectra, it discards the Fourier phase information that represents spatial location. Therefore, \(M_i\) indicates which frequency patterns remain difficult, but it does not identify where the corresponding errors occur in the latent representation. We consequently describe \(M_i\) as a frequency-derived latent weight rather than a localized pixel-level advantage map.

\vspace{-3mm}
\subsection{Unified SALD Objective}
\label{sec:unified}
For pair $i$, SAD first weights the squared hard-path residual element by
element: \vspace{-2mm}
\begin{equation}
    \ell_i^{\mathrm{SAD}}
    =\frac{w(t_{\mathrm h,i})}{D}
    \sum_{p=1}^{D}M_{i,p}\,r_{h,i,p}^{2}
    \label{eq:sad-loss}
\end{equation}
AGD then weights the examples in the mini-batch: \vspace{-2mm}
\begin{equation}
    \mathcal{L}_{\mathrm{SALD}}(\theta)
    =\frac{1}{B}\sum_{i=1}^{B}
    \omega_i\ell_i^{\mathrm{SAD}}
    \label{eq:sald-objective}
    \vspace{-2mm}
\end{equation}

The gradient follows the displayed equations exactly.  Since neither
$\omega_i$ nor $M_i$ is detached and $\nabla_\theta\ell_i^{\mathrm{SAD}}$ includes the gradient through $M_i$.
The only stopped branches are the easy-path references.  TAM is stored as
a non-trainable state and is not backpropagated through time.
SALD uses two forward evaluations, but only the hard evaluation stores a
backward graph.  It introduces an $N$-value TAM buffer and FFT operations, but
no additional trainable or non-trainable (teacher) model parameters are required during training or inference, and the inference procedure is unchanged.

Finally, Eq.~\ref{eq:sald-objective} contains no term that directly matches
the hard prediction to the easy prediction.  The easy path supplies a
reference for constructing adaptive weights; it is not a teacher output target.
For this reason, the method is more precisely described as
\emph{self-referenced advantage learning} than as teacher-student
self-distillation.

\vspace{-2mm}
\subsection{Theoretical Interpretation}
\vspace{-2mm}
\label{sec:theory}

\begin{theorem}[Conditional convergence of the differentiable objective]
Assume that $F_m$ is lower bounded by $F_m^\star$ and is $L$-smooth.  Let
$g_s=\nabla_\theta
\widehat{\mathcal{L}}_{\mathrm{SALD}}(\theta_s;\xi_s,m)$ include all gradient
paths through $\omega_i$ and $M_i$.  Assume
\begin{equation}
    \E[g_s\mid\theta_s]=\nabla F_m(\theta_s),
    \qquad
    \E\!\left[
        \left\|g_s-\nabla F_m(\theta_s)\right\|_2^2
    \right]\leq\nu^2.
    \label{eq:gradient-assumptions}
\end{equation}
For the update $\theta_{s+1}=\theta_s-\eta g_s$ with
$0<\eta\leq1/L$,
\begin{equation}
    \frac{1}{J}\sum_{s=0}^{J-1}
    \E\left\|\nabla F_m(\theta_s)\right\|_2^2
    \leq
    \frac{2\bigl(F_m(\theta_0)-F_m^\star\bigr)}{\eta J}
    +L\eta\nu^2.
    \label{eq:convergence-bound}
\end{equation}
\vspace{-2mm}
\end{theorem}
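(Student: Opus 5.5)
This is the standard nonconvex SGD descent argument. We treat the TAM memory $m$ as frozen, so $F_m$ is a fixed function of $\theta$. The stop-gradient branches and the dependence of $\omega_i$ and $M_i$ on $\theta$ enter only through the stated assumptions: $g_s$ is unbiased for $\nabla F_m(\theta_s)$ and has variance at most $\nu^2$. No structural property of $\mathcal{L}_{\mathrm{SALD}}$ is needed beyond these assumptions and the $L$-smoothness of $F_m$. In particular, the coupling through the batch normalization in Eq.~\ref{eq:agd-weight} and the differentiable mask in Eq.~\ref{eq:normalized-mask} are absorbed into the hypotheses.

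First, we apply the descent lemma implied by $L$-smoothness:
\begin{equation*}
F_m(\theta_{s+1})\le F_m(\theta_s)-\eta\langle\nabla F_m(\theta_s),g_s\rangle+\tfrac{L\eta^2}{2}\|g_s\|_2^2 .
\end{equation*}
Next, we take the conditional expectation given $\theta_s$. Unbiasedness turns the inner product into $\|\nabla F_m(\theta_s)\|_2^2$. The bias--variance split gives
\begin{equation*}
\E[\|g_s\|_2^2\mid\theta_s]=\|\nabla F_m(\theta_s)\|_2^2+\E[\|g_s-\nabla F_m(\theta_s)\|_2^2\mid\theta_s],
\end{equation*}
because the cross term vanishes by unbiasedness. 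Taking total expectation and using the variance bound yields
\begin{equation*}
\E F_m(\theta_{s+1})\le \E F_m(\theta_s)-\eta\bigl(1-\tfrac{L\eta}{2}\bigr)\E\|\nabla F_m(\theta_s)\|_2^2+\tfrac{L\eta^2\nu^2}{2}.
\end{equation*}
Since $\eta\le 1/L$, we have $1-L\eta/2\ge 1/2$, so the coefficient of the gradient term is at least $\eta/2$.

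We then rearrange, sum over $s=0,\dots,J-1$, and telescope. The final-iterate term is handled with $\E F_m(\theta_J)\ge F_m^\star$, which gives
\begin{equation*}
\tfrac{\eta}{2}\sum_{s=0}^{J-1}\E\|\nabla F_m(\theta_s)\|_2^2\le F_m(\theta_0)-F_m^\star+\tfrac{JL\eta^2\nu^2}{2}.
\end{equation*}
Dividing by $\eta J/2$ gives exactly Eq.~\ref{eq:convergence-bound}.

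The only delicate step is conditioning. The variance assumption is stated unconditionally, while unbiasedness is stated conditionally on $\theta_s$. We therefore apply unbiasedness inside the conditional expectation, which kills the cross term, and only afterwards take the total expectation, at which point the unconditional variance bound applies. We must also keep $m$ fixed throughout. If TAM updated $m$ between steps, $F_m$ would change with $s$ and the telescoping would fail. This is why the theorem is stated for a fixed $m$ and is only "conditional."
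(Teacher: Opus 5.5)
Your proposal is correct and follows the paper's proof essentially step for step: the descent lemma, conditional unbiasedness to evaluate the inner product and kill the cross term in $\E\|g_s\|_2^2$, the bound $1-L\eta/2\geq 1/2$, telescoping with $F_m(\theta_J)\geq F_m^\star$, and division by $\eta J/2$. Your remarks on handling the conditioning and on keeping the TAM state $m$ fixed also match the paper's fixed-memory scope remark.
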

\vspace{-4mm}
This standard result guarantees convergence toward a stationary point only
under the stated assumptions.  
%It does not prove that SALD converges faster than uniform flow matching, and it does not imply that the two objectives have the same stationary points. 
Exact statement supported by the spectral gate: for every frequency coordinate $u$,
\begin{equation}
    G_i(u)>\frac{1}{2}
    \quad\Longleftrightarrow\quad
    P_{\mathrm h,i}(u)>
    \sg\!\left(P_{\mathrm e,i}(u)\right).
    \label{eq:gate-property}
\end{equation}
Thus, the gate is larger where the hard residual has more spectral energy than
the easy reference.  However, the final loss multiplies by $M_i$ in the latent
domain rather than multiplying Fourier coefficients directly by $G_i$. Table~\ref{tab:eval_sana_datasets} in the Supplementary empirically support the theoretical claim and shows the faster and better convergence.
\vspace{-3mm}
%%%%%%%%%%%%%%%%%%%%%%%%%%%%%%%%%%
\section{Experiments}
\label{sec:experiments}
\vspace{-2mm}
%%%%%%%%%%%%%%%%%%%%%%%%%%%%%%%%%%

%%%%%%%%%%%%%%%%%%%
\begin{table*}[t]
\vspace{-4mm}
\centering
\small
\caption{Quantitative results on ($512\times512$) resolution datasets. Best values per metric are in \textcolor{blue}{Blue}.}
\vspace{-1mm}
\label{tab:results_large}
\setlength{\tabcolsep}{1pt}
\renewcommand{\arraystretch}{1.1}
\resizebox{\textwidth}{!}{%
\begin{tabular}{l|l|ccc|ccc|ccc|ccc|ccc}
\toprule
\multirow{2}{*}{\textbf{Model}} & \multirow{2}{*}{\textbf{Method}} 
  & \multicolumn{3}{c|}{\textbf{COCO (2017)}} 
  & \multicolumn{3}{c|}{\textbf{MultiGen-20M}} 
  & \multicolumn{3}{c|}{\textbf{Flickr8k}}
  & \multicolumn{3}{c|}{\textbf{CUB-200}} 
  & \multicolumn{3}{c}{\textbf{Oxford 102 Flowers}} \\
\cmidrule(lr){3-5}
\cmidrule(lr){6-8}
\cmidrule(lr){9-11}
\cmidrule(lr){12-14}
\cmidrule(lr){15-17}
 & 
  & \textbf{FID}$\downarrow$ & \textbf{CLIP}$\uparrow$ & \textbf{LPIPS}$\downarrow$ 
  & \textbf{FID}$\downarrow$ & \textbf{CLIP}$\uparrow$ & \textbf{LPIPS}$\downarrow$ 
  & \textbf{FID}$\downarrow$ & \textbf{CLIP}$\uparrow$ & \textbf{LPIPS}$\downarrow$
  & \textbf{FID}$\downarrow$ & \textbf{CLIP}$\uparrow$ & \textbf{LPIPS}$\downarrow$
  & \textbf{FID}$\downarrow$ & \textbf{CLIP}$\uparrow$ & \textbf{LPIPS}$\downarrow$ \\
\midrule

\multirow{5}{*}{\textbf{SANA (1.6B)}} & Base 
  &26.49 & 26.44 & 0.7411
  &26.44 & 26.25 & 0.7541
  &27.22 & 28.49 & 0.7336 
  &21.63 & 26.01 & 0.7748 
  &33.11 & 26.35 & 0.7896 \\
 & SpeeD 
  &25.38 & 26.83 & 0.7409
  &26.11 & 26.26 & 0.7558
  &26.87 & 28.52 & 0.7324 
  &21.53 & 26.28 & 0.7753 
  &32.04 & 26.51 & 0.7782 \\
 & Temporal Diff 
  &26.36 & 26.87 & 0.7405
  &26.31 & 26.29 & 0.7539
  &26.85 & 28.85 & 0.7325 
  &20.72 & 26.25 & 0.7766 
  &33.10 & 26.54 & 0.7801 \\
 & SRA 
  &26.89 & 26.32 & 0.7461
  &26.05 & 25.89 & 0.7554
  &26.87 & 28.78 & 0.7341 
  &20.78 & 26.18 & 0.7722 
  &32.75 & 26.40 & \textcolor{blue}{\textbf{0.7781}} \\
 & \textbf{SALD} 
  &\textcolor{blue}{\textbf{22.76}} & \textcolor{blue}{\textbf{26.88}} & \textcolor{blue}{\textbf{0.7391}}
  &\textcolor{blue}{\textbf{23.86}} & \textcolor{blue}{\textbf{26.29}} & \textcolor{blue}{\textbf{0.7533}}
  &\textcolor{blue}{\textbf{26.75}} & \textcolor{blue}{\textbf{28.88}} & \textcolor{blue}{\textbf{0.7316}} 
  &\textcolor{blue}{\textbf{18.16}} & \textcolor{blue}{\textbf{26.29}} & \textcolor{blue}{\textbf{0.7537}} 
  &\textcolor{blue}{\textbf{27.37}} & \textcolor{blue}{\textbf{26.57}} & \textcolor{blue}{\textbf{0.7781}} \\
\midrule

\multirow{5}{*}{\textbf{SANA (0.6B)}} & Base 
  &31.59 & 26.80 & 0.7408
  &31.93 & 26.68 & 0.7495
  &28.83 & 28.53 & 0.7351
  &26.06 & 26.40 & 0.7807 
  &30.64 & 26.43 & 0.7805 \\
 & SpeeD 
  &31.47 & \textcolor{blue}{\textbf{27.06}} & 0.7411
  &31.57 & 26.69 & 0.7485
  &28.42 & 28.49 & 0.7313 
  &25.98 & \textcolor{blue}{\textbf{26.61}} & 0.7611 
  &32.04 & 26.54 & 0.7753 \\
 & Temporal Diff 
  &31.28 & 27.03 & 0.7424
  &31.67 & 26.66 & 0.7498
  &28.66 & 28.53 & 0.7330 
  &26.18 & 26.53 & 0.7775 
  &33.10 & 26.61 & 0.7736 \\
 & SRA 
  &31.26 & 26.92 & 0.7397
  &31.63 & 26.56 & 0.7473
  &28.86 & 28.21 & 0.7314 
  &24.94 & 26.56 & 0.7730 
  &29.72 & 26.54 & 0.7748 \\
 & \textbf{SALD} 
  &\textcolor{blue}{\textbf{29.31}} & \textcolor{blue}{\textbf{27.06}} & \textcolor{blue}{\textbf{0.7374}}
  &\textcolor{blue}{\textbf{29.15}} & \textcolor{blue}{\textbf{26.69}} & \textcolor{blue}{\textbf{0.7472}}
  &\textcolor{blue}{\textbf{27.91}} & \textcolor{blue}{\textbf{28.59}} & \textcolor{blue}{\textbf{0.7275}}  
  &\textcolor{blue}{\textbf{23.70}} & \textcolor{blue}{\textbf{26.61}} & \textcolor{blue}{\textbf{0.7686}}
  &\textcolor{blue}{\textbf{25.83}} & \textcolor{blue}{\textbf{26.65}} & \textcolor{blue}{\textbf{0.7728}} \\
\midrule

\multirow{5}{*}{\textbf{PixArt-$\Sigma$}} & Base 
  &29.26 & 25.82 & 0.7653
  &31.62 & 25.83 & 0.7306
  &30.83 & 28.44 & 0.7175 
  &22.24 & 25.48 & 0.7491 
  &37.88 & 25.37 & 0.7687 \\
 & SpeeD 
  &29.25 & 25.80 & 0.7533
  &31.42 & 25.83 & \textcolor{blue}{\textbf{0.7292}}
  &30.76 & 28.44 & 0.7153 
  &22.15 & 25.42 & 0.7459 
  &37.16 & 25.39 & 0.7699 \\
 & Temporal Diff 
  &29.21 & 25.85 & 0.7594
  &31.31 & 25.41 & 0.7306
  &30.71 & 28.36 & 0.7170 
  &22.14 & 25.40 & \textcolor{blue}{\textbf{0.7452}} 
  &37.49 & 25.39 & 0.7681 \\
 & SRA 
  &29.21 & 25.88 & 0.7575
  &31.28 & 25.88 & 0.7297
  &30.72 & 28.42 & 0.7159
  &22.15 & 25.00 & 0.7566 
  &37.69 & 25.38 & 0.7706 \\
 & \textbf{SALD} 
  &\textcolor{blue}{\textbf{28.14}} & \textcolor{blue}{\textbf{26.00}} & \textcolor{blue}{\textbf{0.7516}}
  &\textcolor{blue}{\textbf{30.47}} & \textcolor{blue}{\textbf{25.93}} & \textcolor{blue}{\textbf{0.7292}}
  &\textcolor{blue}{\textbf{30.68}} & \textcolor{blue}{\textbf{28.45}} & \textcolor{blue}{\textbf{0.7141}} 
  &\textcolor{blue}{\textbf{21.71}} & \textcolor{blue}{\textbf{25.48}} & \textcolor{blue}{\textbf{0.7452}} 
  &\textcolor{blue}{\textbf{36.48}} & \textcolor{blue}{\textbf{25.42}} & \textcolor{blue}{\textbf{0.7682}} \\
\bottomrule
\end{tabular}%
}
\vspace{-5mm}
\end{table*}
\begin{wraptable}{r}{0.65\linewidth}
\vspace{-4mm}
\centering
\setlength{\tabcolsep}{4pt}
\Large
\caption{Quantitative results on $1024\!\times\!1024$ resolution images.
Best values per metric within each model are in \textcolor{blue}{Blue}.}
\label{tab:results1024}
\vspace{-1mm}
\resizebox{\linewidth}{!}{
\begin{tabular}{l|l|ccc|ccc}
\toprule
\multirow{2}{*}{\textbf{Model}} & \multirow{2}{*}{\textbf{Method}} 
  & \multicolumn{3}{c|}{\textbf{MAGICK}}
  & \multicolumn{3}{c}{\textbf{DALL$\cdot$E~3}} \\
\cmidrule(lr){3-5}\cmidrule(lr){6-8}
 & 
  & \textbf{FID}$\downarrow$ & \textbf{CLIP}$\uparrow$ & \textbf{LPIPS}$\downarrow$ 
  & \textbf{FID}$\downarrow$ & \textbf{CLIP}$\uparrow$ & \textbf{LPIPS}$\downarrow$ \\
\midrule

\multirow{5}{*}{\textbf{SANA 0.6B}} & Base
  & 38.59 & 26.68 & 0.7470 
  & 14.68 & 29.96 & 0.7882 \\
 & SpeeD 
  & 38.27 & 26.65 & 0.7303 
  & 14.18 & 30.05 & 0.7878 \\
 & Temporal Diff 
  & 37.95 & 26.69 & 0.7294 
  & 13.84 & 30.38 & 0.7854 \\
 & SRA 
  & 38.15 & 26.70 & 0.7297 
  & 13.96 & 30.73 & 0.7839 \\
 & \textbf{SALD}
  & \textcolor{blue}{\textbf{36.74}} 
  & \textcolor{blue}{\textbf{26.85}} 
  & \textcolor{blue}{\textbf{0.7230}}
  & \textcolor{blue}{\textbf{11.98}} 
  & \textcolor{blue}{\textbf{31.06}} 
  & \textcolor{blue}{\textbf{0.7826}} \\
\midrule

\multirow{5}{*}{\textbf{SANA 1.6B}} & Base
  & 39.46 & 27.11 & 0.7947 
  & 16.19 & 30.95 & 0.8237 \\
 & SpeeD 
  & 39.34 & 27.28 & 0.7920 
  & 16.05 & 30.91 & \textcolor{blue}{\textbf{0.8206}} \\
 & Temporal Diff 
  & 39.00 & 27.26 & 0.7929 
  & 16.04 & 30.97 & 0.8283 \\
 & SRA 
  & 39.26 & 27.11 & 0.7879 
  & 16.15 & 30.94 & 0.8246 \\
 & \textbf{SALD}
  & \textcolor{blue}{\textbf{37.39}} 
  & \textcolor{blue}{\textbf{27.54}} 
  & \textcolor{blue}{\textbf{0.7877}} 
  & \textcolor{blue}{\textbf{15.82}} 
  & \textcolor{blue}{\textbf{30.99}} 
  & \textcolor{blue}{\textbf{0.8206}} \\
\midrule

\multirow{5}{*}{\textbf{PixArt-$\Sigma$}} & Base
  & 36.72 & 26.31 & 0.7311 
  & 12.03 & 30.23 & 0.8056 \\
 & SpeeD 
  & 36.56 & \textcolor{blue}{\textbf{26.47}} & 0.7197 
  & 11.76 & 30.29 & 0.7844 \\
 & Temporal Diff 
  & 36.49 & 26.28 & 0.7295 
  & 11.65 & 30.66 & 0.7966 \\
 & SRA 
  & 36.52 & 26.44 & 0.7399 
  & 11.73 & 31.29 & 0.7864 \\
 & \textbf{SALD}
  & \textcolor{blue}{\textbf{35.38}} 
  & \textcolor{blue}{\textbf{26.47}} 
  & \textcolor{blue}{\textbf{0.7191}}
  & \textcolor{blue}{\textbf{11.06}} 
  & \textcolor{blue}{\textbf{31.35}} 
  & \textcolor{blue}{\textbf{0.7700}}  \\

\midrule

\multirow{5}{*}{\textbf{Flux.2 Klein}} & Base
  & 35.93 & 24.93 & 0.7804 
  & 14.11 & 31.79 & 0.7666 \\
 & SpeeD 
  & 35.22 & 24.82 & 0.7904 
  & 14.00 & 31.79 & 0.7659 \\
 & Temporal Diff 
  & 35.92 & 24.93 & 0.7803 
  & 13.62 & 31.82 & 0.7618 \\
 & SRA 
  & 35.05 & 24.72 & 0.7914 
  & 13.44 & 31.76 & 0.7622 \\
 & \textbf{SALD}
  & \textcolor{blue}{\textbf{31.63}} 
  & \textcolor{blue}{\textbf{25.49}} 
  & \textcolor{blue}{\textbf{0.7591}}
  & \textcolor{blue}{\textbf{10.85}} 
  & \textcolor{blue}{\textbf{31.84}} 
  & \textcolor{blue}{\textbf{0.7585}} \\
 
\bottomrule
\end{tabular}
}
\vspace{-4mm}
\end{wraptable}
%%%%%%%%%%%%%%%%%%% 

We conduct extensive experiments across $4$ architectures, $7$ benchmarks, and $2$ resolutions to comprehensively evaluate the generality and effectiveness of our method.

\textbf{Datasets.}
We evaluate scalability and precision on seven image--text datasets at $512\!\times\!512$ and $1024\!\times\!1024$ resolutions. At $512\!\times\!512$, MultiGen-20M~\cite{limingcv2024multigen} and COCO (2017)~\cite{coco} assess large-scale performance: Fr\'echet Inception Distance (FID)~\cite{heusel2017gans} is computed on the COCO validation set; MultiGen-trained models are tested on COCO validation for cross-dataset generalization, while COCO-trained models use the same split. Fine-grained evaluation uses Flickr8k~\cite{flicker8k}, CUB-200-2011~\cite{cub200,cub-cap}, and Oxford Flowers~\cite{oxfordflower,flowers-cap} datasets.
At $1024\!\times\!1024$ resolution we use MAGICK~\cite{magick} and DALL$\cdot$E 3 1M~\cite{Egan_Dalle3_1_Million_2024}.
These $5$ datasets use an 80/20 train--test split.
% All others use an 80/20 split.

%%%%%%%%%%%%%%%%%%%%%%%%%%%%%%%%%%
\begin{figure}[t]
    \centering
    \vspace{3mm}
    \begin{minipage}{1\linewidth}
        \centering

        \includegraphics[width=1\linewidth]{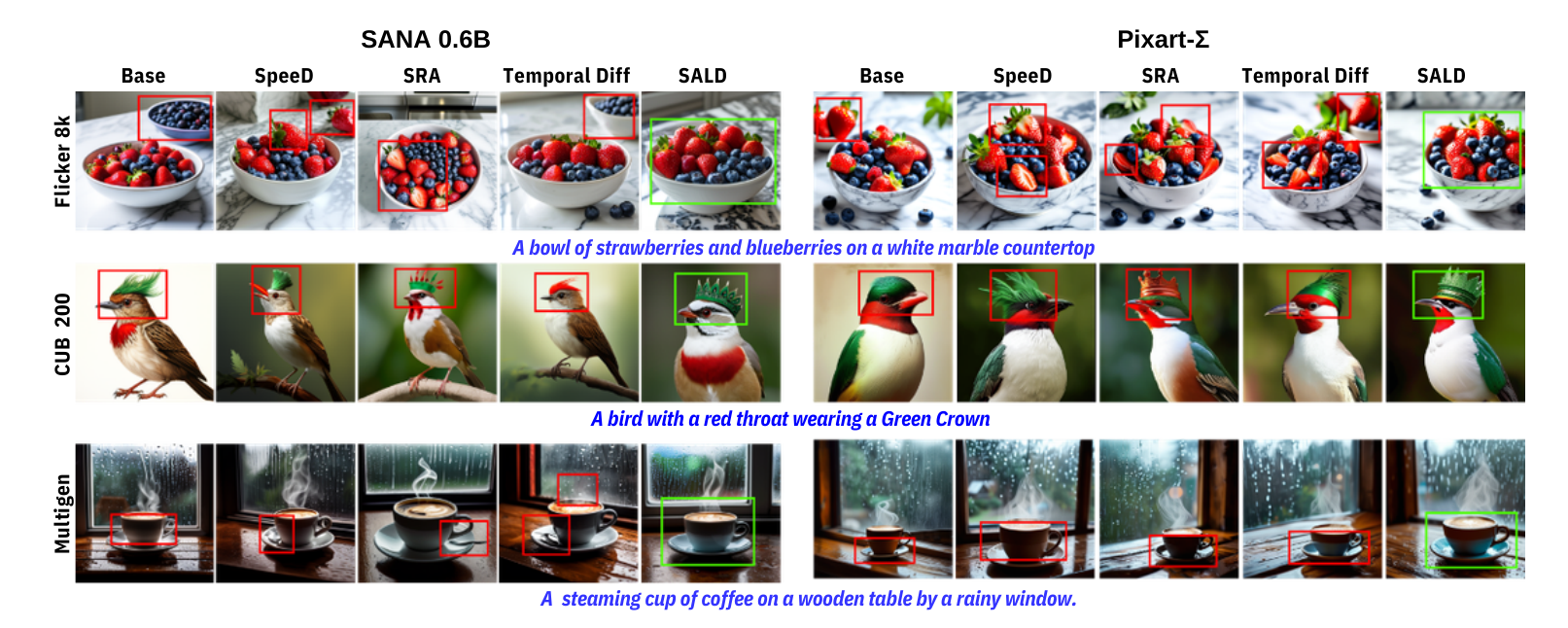}
        \vspace{-5mm}
    \caption{Visual comparison SALD with baselines. \textcolor{red}{\textbf{Red boxes}} indicate missing details or artifacts in latest baseline models. \textcolor{green}{\textbf{Green boxes}} shows our method generates better high-quality details.}
    \label{fig:slad}
    \end{minipage}

    \vspace{3mm}

    \begin{minipage}{1\linewidth}
        \centering
        \includegraphics[width=1\linewidth]{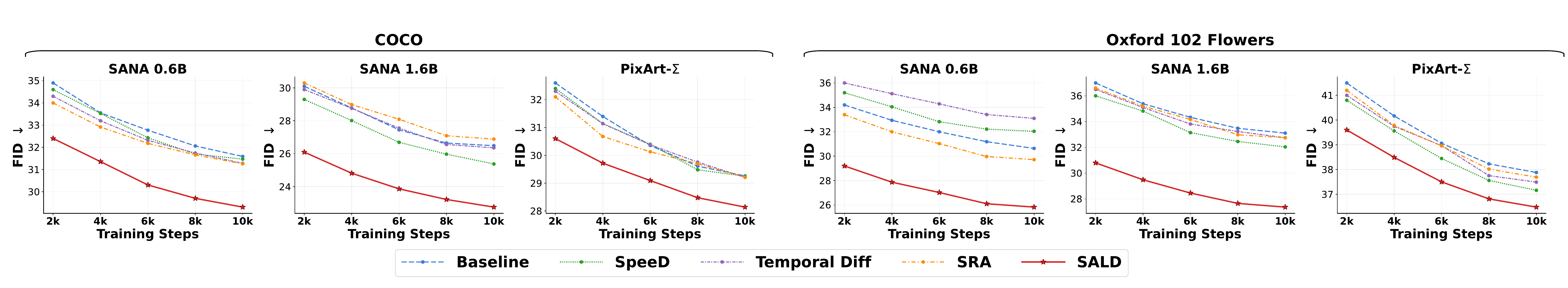}
        \vspace{-5mm}
        \caption{FID with training steps showing better and smoother convergence of SALD w.r.t. baselines.}
        \label{fig:converge}
    \end{minipage}

    \vspace{-5mm}
\end{figure}
%%%%%%%%%%%%%%%%%%%%%%%%%%%%%%%%%%
\begin{figure}[t]
    \centering
    % \vspace{-4mm}
    \includegraphics[width=\linewidth, height=6.5cm]{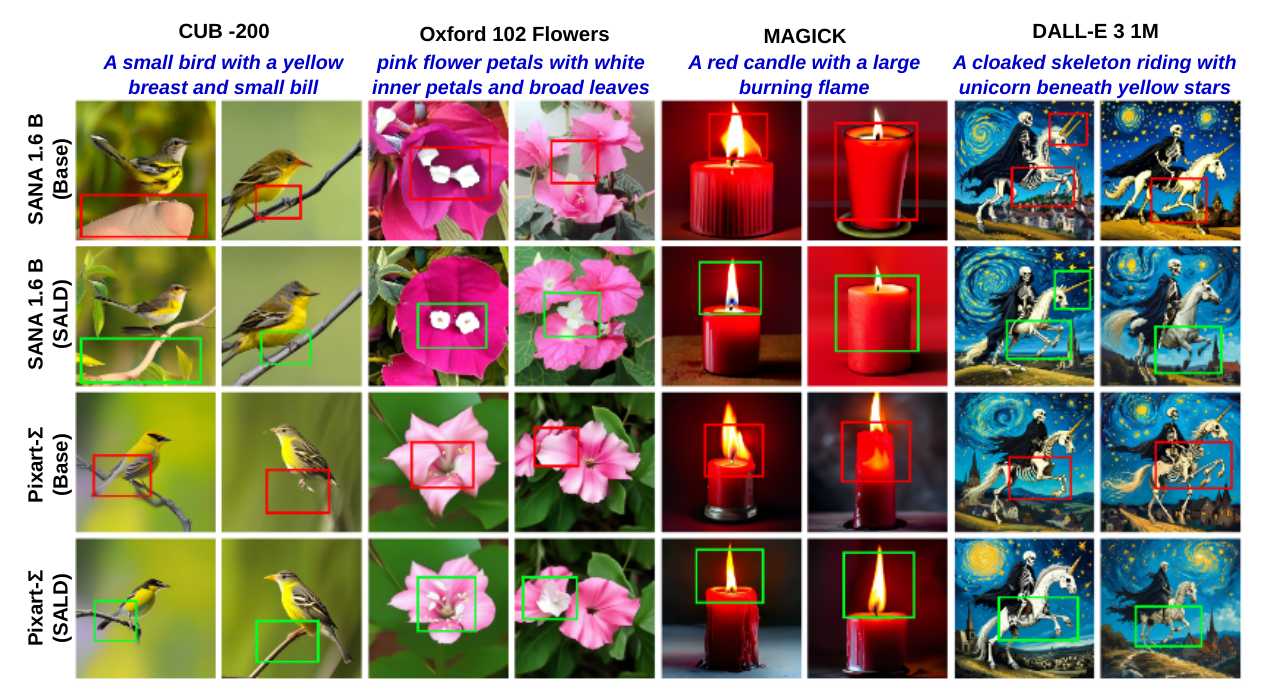}
    % \vspace{-3mm}
        \caption{Visual comparison of generated samples. 
        \textcolor{red}{\textbf{Red boxes}} indicate missing details or artifacts in latest baseline models. 
        \textcolor{green}{\textbf{Green boxes}} highlight that our method generates better high-quality details. 
        We use the same seed for all comparisons to ensure fairness. Captions are in \textcolor{blue}{\textbf{\textit{Blue Color}}.}}
        \label{fig:figure1}
    \vspace{-6mm}
\end{figure}
%%%%%%%%%%%%%%%%%%%%%%%%%%%%%%%%%%%%%%%%%%%

\textbf{Implementation Details.}
All experiments are conducted on NVIDIA RTX A6000 GPUs under a unified setup using the AdamW optimizer~\cite{Adamw} and mixed-precision (\texttt{bf16}) training. We evaluate SALD on four architectures: SANA (0.6B and 1.6B)~\cite{SANA}, PixArt-$\Sigma$~\cite{pixartsigma}, and Flux.2 Klein Base (4B)~\cite{flux2kleinbase2025}. SANA and PixArt-$\Sigma$ models are used for $512$ resolution experiments, while the high-resolution variants of SANA (0.6B and 1.6B), Flux.2 Klein Base (4B) and PixArt-$\Sigma$ are used for $1024$ resolution experiments. All models are trained for $10$k steps unless otherwise specified. Additional training details are mentioned in Appendix~\ref{sec:training_config}.

% Although SALD is developed for flow matching, its relative advantage formulation is architecture-agnostic; we further extend it to DDPM-based models, with details provided in Appendix~\ref{sec:ext_ddpm}.

\textbf{Evaluation Protocol.}
We evaluate generated images using Fr\'{e}chet Inception Distance (FID)~\cite{FID}, CLIP score~\cite{clip}, and Learned Perceptual Image Patch Similarity (LPIPS)~\cite{LPIPS}. FID measures distributional similarity to real images (quality and diversity). CLIP score captures alignment with text prompts, while LPIPS assesses perceptual similarity, together evaluating fidelity, semantics, and realism. FID is computed using the Inception-V3 network with 2048-dimensional pool features. CLIP Score is computed using the CLIP ViT-L/14 model from OpenAI. LPIPS is computed using a VGG backbone. Further details are mentioned in Appendix~\ref{sec:training_config}.

\vspace{-3mm}
%%%%%%%%%%%%%%%%%%%%%%%%%%%%%%%%%%
\subsection{Results and Discussion}
\label{sec:results}
\vspace{-2mm}
%%%%%%%%%%%%%%%%%%%%%%%%%%%%%%%%%%

\noindent\textbf{Quantitative Analysis:}
Tables~\ref{tab:results_large}, and \ref{tab:results1024} report quantitative comparisons across seven benchmarks at $512$ and $1024$ resolutions. SALD consistently outperforms the competing acceleration and distillation baselines, including SpeeD \cite{Speed}, Temporal Diff \cite{temporaldiff}, and SRA \cite{sra}, across both SANA, PixArt and Flux.2 Klein architectures. In particular, SALD achieves the best FID, CLIP, and LPIPS scores across the majority of evaluated settings, demonstrating its ability to simultaneously improve generation fidelity, text-image alignment, and perceptual quality. The improvements are consistent across different model scales and datasets, with particularly substantial FID reductions observed at both $512$ and $1024$ resolutions. Compared with the existing baselines, these results demonstrate that SALD provides a more effective and consistent strategy for improving diffusion model generation quality while preserving semantic alignment.

\noindent\textbf{Qualitative Analysis:}
Figures~\ref{fig:slad} and~\ref{fig:figure1} present qualitative comparisons of SALD across diverse architectures, datasets, and training methods. Figure~\ref{fig:slad} compares SALD with Base, SpeeD, SRA, and Temporal Diff on SANA 0.6B and PixArt-$\Sigma$, where SALD consistently preserves finer details, clearer structures, and prompt-specific semantic attributes, while competing methods often exhibit missing details and visual artifacts. Figure~\ref{fig:figure1} further compares Base and SALD on CUB-200, Oxford-102 Flowers, MAGICK, and DALL$\cdot$E~3 1M, demonstrating improved structural coherence, fine-grained textures, and semantic fidelity, particularly in challenging high-resolution examples. Overall, these results highlight the consistent visual improvements achieved by SALD across models and datasets.

\noindent\textbf{Convergence Analysis:}
Figure~\ref{fig:converge} shows FID convergence across training steps on COCO and Oxford 102 Flowers. Across all settings, SALD converges faster and achieves consistently lower FID than the other baselines, indicating improved optimization and fidelity. Further experiments on this are present in Appendix~\ref{appen:convergence}

\begin{wraptable}{r}{0.65\linewidth}
\vspace{-10mm}
% \vspace{-12.5mm}
\centering
\setlength{\tabcolsep}{2pt}
\Large
\caption{Ablation study of \emph{SALD} on SANA~0.6B across COCO~2017 ($512$) and MAGICK ($1024$). 
% Best result per metric is in \textcolor{blue}{Blue}
}
\vspace{-1mm}
\label{tab:ablation}
\resizebox{\linewidth}{!}{
\begin{tabular}{l|lccc|ccc}
\toprule
\multirow{2}{*}{\textbf{Cases}} & \multirow{2}{*}{\textbf{Setting}}
  & \multicolumn{3}{c|}{\textbf{COCO ($512$)}}
  & \multicolumn{3}{c}{\textbf{MAGICK ($1024$)}} \\
\cmidrule(lr){3-5}\cmidrule(lr){6-8}
 & & \textbf{FID}$\downarrow$ & \textbf{CLIP}$\uparrow$ & \textbf{LPIPS}$\downarrow$
   & \textbf{FID}$\downarrow$ & \textbf{CLIP}$\uparrow$ & \textbf{LPIPS}$\downarrow$ \\
\midrule
\multirow{3}{*}{\textbf{Case 1}}
& AGD only   & 30.95& 26.30& 0.7462 &37.75 & 26.73& 0.7354\\
& SAD only   & 31.02& 26.49& 0.7427 &37.86 & 26.70& 0.7314\\
& TAM only   & 30.98& 26.50 & 0.7474& 37.35& 26.74& 0.7323\\
\midrule
\multirow{3}{*}{\textbf{Case 2}}
& AGD + TAM  & 30.78 & 26.60 &0.7529 & 37.34& 26.72& 0.7396\\
& AGD + SAD  & 30.56 & 26.66 & 0.7375 & 37.78& 26.79& 0.7400\\
& TAM + SAD  & 30.36 & 26.86 & 0.7391 & 37.23& 26.78& 0.7389\\
\midrule
\multirow{4}{*}{\textbf{Case 3}}
& $\tau = 0.1$                 & 31.10 & 26.09 & 0.7459 & 38.22 & 26.66 & 0.7423\\
& $\tau = 0.5$ \textit{(ours)} & 29.31 & 27.06& 0.7374 & 36.74 & 26.85& 0.7230 \\
& $\tau = 1.0$                 & 31.22& 26.08 & 0.7455 & 38.34& 26.63& 0.7445\\
& $\tau = 2.0$                 & 31.25 & 26.13 & 0.7457 & 38.42& 26.62& 0.7431\\
\midrule
\multirow{4}{*}{\textbf{Case 4}}
& $\alpha = 0.50$                  & 30.89 & 26.14 & 0.7454 & 38.96& 26.64& 0.7443\\
& $\alpha = 0.90$ \textit{(ours)}  & 29.31 & 27.06& 0.7474 & 36.74 & 26.85& 0.7230 \\
& $\alpha = 0.99$                  & 29.52 & 26.11& 0.7461 & 38.23& 26.64& 0.7443\\
& $\alpha = 0.999$                 & 29.30 & 26.12 & 0.7463 & 38.76& 26.68& 0.7455\\
\midrule
\multirow{3}{*}{\textbf{Case 5}}
& $\Delta_{\max} = 100$                & 30.17& 26.75 & 0.7393& 36.78 & 26.62& 0.7467\\
& $\Delta_{\max} = 250$ \textit{(ours)}& 29.31 & 27.06& 0.7374 & 36.74 & 26.85& 0.7230 \\
& $\Delta_{\max} = 500$                & 31.96&26.64 & 0.7398& 36.81& 26.61 &0.7405 \\
\midrule
\multirow{5}{*}{\textbf{Case 6}}
& Abs.\ loss (no contrast)     & 29.64 & 26.98 & 0.7385 & 36.95& 26.54&0.7305\\
& Fixed $\delta$ (no ramp)     & 29.79 & 26.77 & 0.7383 & 37.05& 26.48 & 0.7475\\
& Detach disabled              & 30.05&26.85 & 0.7390 & 37.43& 26.67& 0.7445\\
& $12$ Steps                   & 30.63& 26.68 & 0.7472 & 37.85& 26.62&0.7476 \\
& Cross-Batch Advantage        & 30.18& 26.92 & 0.7386 & 37.57& 26.75& 0.7434\\
\midrule
\textbf{Base} & Uniform & 31.59 & 26.80 & 0.7408 & 38.59 & 26.68 & 0.7470\\
\midrule
\rowcolor{blue!15} \textbf{SALD} & \textbf{AGD-TAM-SAD} & 29.31 & 27.06& 0.7374 & 36.74 & 26.85& 0.7230\\
\bottomrule
\end{tabular}}
\vspace{-7mm}
\end{wraptable}
% \vspace{-2mm}
%%%%%%%%%%%%%%%%%%%%%%%%%%%%%%%%%%

\vspace{-3mm}
\subsection{Ablation Study}
\label{sec:ablations}
\vspace{-2mm}

We conduct an ablation study using SANA~0.6B on COCO~2017 ($512\!\times\!512$) and MAGICK ($1024\!\times\!1024$), using the uniform flow-matching baseline as the reference across all cases.

\noindent\textbf{\textit{Case 1:} Individual Contributions.}
AGD contributes most strongly among individual components, as direct difficulty-aware reweighting provides an immediate training signal. SAD offers complementary gains through spectral routing, improving structural fidelity at the cost of modest CLIP degradation. TAM alone yields the weakest improvement, since its EMA-based memory requires a sustained difficulty signal from AGD to stabilise, a notable limitation when used in isolation. Together, all three components are necessary to close the full performance gap.

\noindent\textbf{\textit{Case 2:} Pairwise Combinations.}
AGD+SAD achieves the best pairwise result, as the two components operate on complementary axes, sample difficulty, and spectral structure, without cross-dependency. TAM+SAD performs weakest among pairs, confirming TAM's dependence on AGD's advantage signal for effective curriculum updates. The SALD surpasses all pairs, demonstrating non-redundant synergy across components.

\noindent\textbf{\textit{Case 3:} AGD Temperature $\tau$.}
Moderate sharpness ($\tau=0.5$) best balances gradient focus and perceptual stability. Overly low $\tau$ over-concentrates on extreme samples, slightly harming LPIPS, while high $\tau$ degrades toward a near-uniform loss, weakening the difficulty signal. SALD exhibits a clear U-shaped sensitivity, with gains degrading at both extremes.

\noindent\textbf{\textit{Case 4:} TAM EMA Decay $\alpha$.}
The default $\alpha=0.90$ strikes the best balance between curriculum responsiveness and stability. Aggressive decay ($\alpha=0.50$) introduces noisy difficulty estimates, while near-frozen memory ($\alpha=0.999$) fails to track the evolving training landscape. The wider performance spread on MAGICK confirms that longitudinal difficulty tracking is more critical at higher resolution, where sample complexity varies more sharply.

\noindent\textbf{\textit{Case 5:} Maximum Noise Gap $\Delta_{\max}$.}
Both extremes degrade performance: a narrow gap ($\Delta_{\max}=100$) weakens contrast between hard and easy paths, while an overly wide gap ($\Delta_{\max}=500$) pushes the easy path into a near-noiseless regime, overestimating difficulty and destabilising training. The default $\Delta_{\max}=250$ yields the best results across both benchmarks, with a consistent U-shaped FID response confirming the sensitivity of the advantage signal to gap magnitude.

% \noindent\textbf{\textit{Case 6:} Core Design Decisions.}
% All four design choices prove critical. \textit{Absolute loss (no contrast):} replacing $A^{(i)} = \ell_b^{(i)} - \mathrm{sg}(\ell_a^{(i)})$ with $|\ell_a^{(i)}|$ collapses gains toward the baseline, most severely at $1024^2$, confirming that relative difficulty, not raw loss magnitude, is the effective training signal. \textit{Fixed $\delta$ (no curriculum ramp):} removing the warm-up destabilises the advantage signal during early training, mildly degrading perceptual quality. \textit{Detach disabled:} allowing $w^{(i)}$ to backpropagate causes the model to exploit the weighting mechanism rather than improve predictions, validating $\mathrm{sg}(\cdot)$ as a non-negotiable design constraint. \textit{Reduced inference steps ($12$ vs.\ $20$):} notably, RAD at 12 steps still outperforms the 20-step uniform baseline across both benchmarks, suggesting that by training the model to better approximate the hard-to-easy denoising trajectory, RAD produces a smoother learned flow that requires fewer inference steps to converge, offering a practical efficiency benefit without sacrificing quality.

\noindent\textbf{\textit{Case 6:} Core Design Decisions.}
All five choices prove critical. \textit{Absolute loss (no contrast):} replacing $A^{(i)} = \ell_b^{(i)} - \mathrm{sg}(\ell_a^{(i)})$ with $|\ell_a^{(i)}|$ collapses gains toward the baseline, most severely at $1024^2$, confirming relative difficulty, not raw loss magnitude, as the effective signal. \textit{Fixed $\delta$ (no ramp):} removing the warm-up destabilises the advantage signal during early training, mildly degrading perceptual quality. \textit{Detach disabled:} allowing $w^{(i)}$ to backpropagate causes the model to exploit the weighting mechanism rather than improve predictions, validating $\mathrm{sg}(\cdot)$ as non-negotiable. \textit{Reduced steps ($12$ vs.\ $20$):} SALD at 12 steps still outperforms the 20-step uniform baseline, indicating that training against the hard-to-easy trajectory yields a smoother learned flow with fewer steps required at inference. \textit{Cross-batch advantage:} computing advantage relative to a second independently drawn batch rather than the same sample's easier noise path yields partial gains but falls short of SALD, confirming that intra-sample noise contrast is a strictly stronger signal than inter-sample difficulty, which conflates sample hardness with distributional variation.
\vspace{-4mm}
\section{Related Work}
\vspace{-3mm}
\label{sec:related}
Denoising diffusion models~\cite{ho2020denoising,das2026unityattentionflownetworks} and score-based generative models~\cite{song2021scorebased} established iterative denoising as a powerful paradigm for image synthesis. Flow matching~\cite{lipman2023flow,albergo2023stochastic} provides
a continuous-time alternative by learning velocity fields along interpolation paths. Recent transformer-based generators, including DiT~\cite{peebles2023scalable}, rectified-flow transformers~\cite{esser2024scaling}, SANA~\cite{SANA},
PixArt-$\alpha$~\cite{pixartalpha}, and PixArt-$\Sigma$~\cite{pixartsigma}, further improve scalability and high-resolution generation. However, most objectives still
treat samples, timesteps, and spatial locations uniformly~\cite{DAS2025113152,11491926,karim2026focusforcingincontextobject}. Several works improve training by reallocating the gradient budget across noise levels. EDM~\cite{karras2022elucidating} studies principled noise schedules and $\sigma$-weighting, while P2~\cite{choi2022perception} emphasizes perceptually important SNR regions. Improved noise schedules~\cite{hang2024noise} and noise-instance analysis~\cite{chen2024noises} further show that different noise levels and noise samples provide unequal training signals~\cite{das2026efficienttextguidedconvolutionaladapter,9577942}. These approaches define global timestep-dependent weights, whereas AGD computes per-sample relative difficulty from the loss gap between low and high-noise paths.

Our work is also related to curriculum learning~\cite{bengio2009curriculum}, self-paced learning~\cite{kumar2010selfpaced}, dynamic instance hardness
~\cite{chang2020dynamicinstancehardness}, adaptive curricula~\cite{kong2021adaptivecl}, diffusion curricula~\cite{liang2025discl}, and adaptive point-weighting
~\cite{apwcl2025}. Hard-example mining~\cite{felzenszwalb2010dpm,shrivastava2016ohem}
and DRO~\cite{sagawa2019dro,nips2024droalignment,icml2024dro_layers} similarly emphasize difficult examples, while meta-hard mining~\cite{mu2025metahard} and coreset selection~\cite{icml2025coreset} select informative samples. Unlike these methods, TAM stores an EMA of relative advantage and uses it to adapt the timestep gap for each sample. Finally, SAD connects to spectral bias studies~\cite{rahaman2019spectral,
zhiqin2020frequency,wang2025spectraldiffusion} and frequency-aware diffusion methods
~\cite{nips2025spectralanalysis,si2024freeu,shapingbias2025,spectralguided2026,
liu2025freqdehaze}.
Instead of modifying the architecture, sampler, or noising process, SAD uses contrastive spectral residuals to construct spatial importance weights for loss reweighting. Recent work uses feedback-conditioned output distribution to distill
\cite{shenfeld2026selfdistillation,hubotter2026reinforcement} from teacher to student in the language domain but requires demonstration example and EMA based teacher. The proposed model can be seen as distillation from easy to hard sample, however it does not requires extra copy of the model parameters.

\vspace{-4mm}
\section{Conclusion}
\label{sec:conclusion}
\vspace{-3mm}

% We presented SALD, a unified training framework for flow-based diffusion models that reallocates gradient budget across samples, noise levels, and spatial frequencies. AGD estimates relative sample difficulty by contrasting losses between low- and high-noise states, producing normalized within-batch importance weights. TAM extends this signal across the training set through persistent per-sample memory, enabling an adaptive curriculum over timestep gaps. SAD further refines optimization by converting contrastive spectral residuals into spatial importance weights, directing gradients toward under-learned image regions. Since all weighting signals are detached, SALD preserves the original prediction target while improving where and how training effort is applied. In general, SALD offers a simple, architecture-agnostic approach for improving diffusion training without modifying the model, sampler, or inference procedure. Extensive experiments across diverse datasets and architectures demonstrate the effectiveness of the proposed framework, while theoretical analysis supports the empirical findings.

We presented SALD, a unified training framework for flow-based diffusion models that reallocates gradient budget across samples, noise levels, and frequency components. AGD estimates relative sample difficulty by contrasting losses between low and high-noise states, producing normalized within-batch importance weights. TAM extends this signal across the training set through persistent per-sample memory, enabling an adaptive curriculum over timestep gaps. SAD further refines optimization by converting contrastive spectral residuals into frequency-derived latent weights, emphasizing frequency patterns that remain under-learned. Since only the easy-path reference is detached, SALD preserves the original prediction target while improving where and how training effort is applied. In general, SALD offers a simple, architecture-agnostic approach for improving diffusion training without modifying the model, sampler, or inference procedure. Extensive experiments across diverse datasets and architectures demonstrate the effectiveness of the proposed framework, while theoretical analysis supports the empirical findings.
%\newpage

% \input{sec/xtra}
\bibliographystyle{iclr2027_conference}
\bibliography{main.bib}

% Supplementary
%%%%%%%%%%%%%%%%%%%%%%%%%%%%%%%%%%%%%%%%
\newpage
\appendix
\section*{Appendix}
\label{sec:suppl}

%%%%%%%%%%%%%%%%%%%%%%%%%%%%%%%%%%%%%%%%%%%%%%%%%%%
\section{Why Is Uniform Training Suboptimal?}
%%%%%%%%%%%%%%%%%%%%%%%%%%%%%%%%%%%%%%%%%%%%%%%%%%%
\label{app:uniform-training}

The standard flow-matching loss used in Section~2.1 averages the losses of all
training samples and all elements of their latent residuals.  Apart from the
time-dependent factor $w(t)$, it assigns the same explicit coefficient to every
sample and latent element.  This does not mean that all samples produce equal
gradients; samples with larger residuals can naturally produce larger gradient
magnitudes.  Rather, the objective has no explicit mechanism for adapting a
sample's importance according to its current difficulty or for distinguishing
the frequency content of its remaining residual.  This uniform treatment leads
to two limitations that can become more important as training progresses,
especially during domain-specific fine-tuning.

The first limitation is at the sample level.  Not all images are equally
difficult for the current model.  A clean portrait against a plain background
may be learned early, after which its loss becomes small and further updates
provide limited new information.  Meanwhile, a cluttered street scene with
reflections, thin text, and overlapping objects may remain difficult for much
longer.  Under uniform weighting, both samples retain the same explicit
coefficient in the mini-batch objective, even though reallocating more emphasis
to the difficult sample may be more useful.  Prior work on hard-example mining
and diffusion-noise selection similarly shows that examples and noise
realizations can provide unequal training value
\cite{shrivastava2016ohem,qi2024notallnoises}.  Consequently, uniform weighting
can spend part of the finite optimization budget on already well-learned
samples while under-emphasizing examples that remain difficult.

The second limitation concerns the frequency content of the residual within
each image.  Deep networks are known to learn low-frequency content faster than
high-frequency content, a behavior referred to as spectral bias or the
frequency principle \cite{rahaman2019spectral,zhiqin2020frequency}.  Related
frequency-dependent behavior has also been studied in diffusion models
\cite{yang2023diffusion,si2024freeu,wang2025spectral}.  Smooth structures such
as skies and walls may be reconstructed earlier, whereas fine details such as
hair strands, feather textures, specular highlights, and small text can lag
behind.  However, the standard mean-squared error collapses the full latent
residual into a single scalar and does not explicitly indicate which frequency
bands retain comparatively large error.  Importantly, frequency magnitude does
not identify the spatial locations of those errors because the power spectrum
discards Fourier phase.  The relevant goal is therefore to identify difficult
frequency components and use them to construct a frequency-derived latent
weight, rather than to claim a localized pixel-level error map.

One natural idea is curriculum learning
\cite{bengio2009curriculum,kumar2010selfpaced}, in which easy samples are shown
first and harder samples are introduced later.  Classical curricula, however,
often rely on a fixed notion of difficulty.  In diffusion training, a sample's
difficulty changes as the model improves, so a static ordering can quickly
become stale.  What is needed instead is a strategy that (i) measures which
samples become difficult relative to the current model when noise is increased,
(ii) remembers this relative difficulty across training steps, and (iii)
identifies which residual frequencies remain difficult.  Unlike
demonstration- or feedback-conditioned language-model self-distillation
\cite{shenfeld2026selfdistillation,hubotter2026reinforcement}, SALD constructs
its reference from two noise states of the same image--caption pair: a
lower-noise easy path evaluated without gradient tracking and a higher-noise
hard path that supplies the training gradient.  Advantage-Guided Diffusion
(AGD) performs differentiable sample-level reweighting, Temporal Advantage
Memory (TAM) maintains persistent relative-difficulty estimates and adapts
future noise gaps, and Spectral Advantage Decomposition (SAD) constructs a
frequency-derived latent weight.  These components share one set of model
parameters and require no external teacher or additional model copy during
training or inference.

%%%%%%%%%%%%%%%%%%%%%%%%%%%%%%%%%%%%%%%%
\section{Extension to DDPM}\label{sec:ext_ddpm}
%%%%%%%%%%%%%%%%%%%%%%%%%%%%%%%%%%%%%%%%
Our framework can be extended to DDPM-style noise-prediction models, although we do not validate this extension experimentally (Appendix~\ref{sec:limitations}).  The core
difference between flow matching and DDPM lies in the prediction target
\cite{lipman2023flow,ho2020denoising}.  In the flow-matching formulation used
in the main paper, the network predicts the velocity
$u=\epsilon-x_0$ along the linear interpolation path.  In an
$\epsilon$-prediction DDPM, the network predicts the Gaussian noise added to
the clean representation.  The forward process and residual are
\begin{equation}
    x_t
    =\sqrt{\bar{\alpha}_t}\,x_0
    +\sqrt{1-\bar{\alpha}_t}\,\epsilon,
    \qquad
    r(t,\epsilon)
    =\epsilon_\theta(x_t,t,c)-\epsilon,
    \label{eq:ddpm-forward-residual}
\end{equation}
where $\epsilon\sim\mathcal{N}(0,I)$ and $\bar{\alpha}_t$ is the cumulative
noise schedule.  The corresponding scalar loss is
\begin{equation}
    \ell(t,\epsilon)
    =\frac{w(t)}{D}\left\|r(t,\epsilon)\right\|_2^2,
    \qquad D=CHW.
    \label{eq:ddpm-scalar-loss}
\end{equation}
Thus, extending SALD requires replacing the velocity residual in the main
formulation with the DDPM noise-prediction residual in
Eq.~\eqref{eq:ddpm-forward-residual}.

The two paths keep the same roles as in the main paper.  Path A is the easy,
lower-noise path and is evaluated without gradient tracking; Path B is the
hard, higher-noise path and supplies the training gradient.  Under the standard
DDPM ordering, $t_{\mathrm e}<t_{\mathrm h}$ and
$\bar{\alpha}_{t_{\mathrm e}}>\bar{\alpha}_{t_{\mathrm h}}$.  Using independent
noises for the two paths, we compute
\begin{equation}
    \ell_{\mathrm e}
    =\ell(t_{\mathrm e},\epsilon_{\mathrm e}),
    \qquad
    \ell_{\mathrm h}
    =\ell(t_{\mathrm h},\epsilon_{\mathrm h}),
    \qquad
    A_i
    =\ell_{\mathrm h,i}
    -\sg\!\left(\ell_{\mathrm e,i}\right).
    \label{eq:ddpm-advantage}
\end{equation}
Nothing about AGD changes except the predicted quantity.  Importantly,
$A_i$ is not fully detached: only the easy reference is stopped, so
$\nabla_\theta A_i=\nabla_\theta\ell_{\mathrm h,i}$.

TAM also keeps the same update rule.  It stores a scalar exponential-moving
average for each training pair and updates it using the numerical value of the
relative advantage.  The memory vector $m$, decay $\alpha$, and modulation by
$\tanh(m_i)$ remain unchanged.  The stored memory is used to select the current
easy-to-hard gap before the two paths are evaluated; the current advantage then
updates the memory for the next visit.  The memory is a non-trainable state
buffer and is not backpropagated through time.

SAD also carries over by operating on the noise-prediction residuals
$r_{\mathrm e}^{(i)}$ and $r_{\mathrm h}^{(i)}$ instead of velocity residuals.
These tensors remain in $\mathbb{R}^{C\times H\times W}$, so the real FFT,
power-spectrum comparison, spectral advantage, sigmoid gate, and inverse-FFT
construction of $M_i$ are unchanged.  As in the flow-matching formulation,
only the easy residual spectrum is stopped; $M_i$ remains differentiable
through the hard residual.  Frequency-dependent learning behavior motivates
this comparison \cite{rahaman2019spectral,wang2025spectral}, but its
interpretation remains limited: because the power spectrum discards phase,
$M_i$ is a frequency-derived latent weight rather than a localized error map.
Moreover, applying $M_i$ in the latent domain does not independently rescale
each Fourier coefficient.

The remaining practical consideration is loss weighting.  The distribution
used to sample $t$ and the factor $w(t)$ used to weight the loss are distinct
choices.  The original DDPM simple objective uses uniform loss weighting,
whereas perceptual or signal-to-noise-ratio-derived alternatives can also be
used \cite{ho2020denoising,choi2022perception}.  SALD is compatible with any
nonnegative rule $w(t)$, provided that the same rule is evaluated at the
corresponding easy and hard levels when computing
Eq.~\eqref{eq:ddpm-advantage}.  The final SAD-weighted loss uses
$w(t_{\mathrm h})$ because gradients are taken only through the hard path.

%%%%%%%%%%%%%%%%%%%%%%%%%%%%%%%%%%%%%%%%%%%%%%%%%%%
\section{Proof and Scope of Theorem~1}
%%%%%%%%%%%%%%%%%%%%%%%%%%%%%%%%%%%%%%%%%%%%%%%%%%%

For a fixed TAM memory state $m$, let
\begin{equation}
    F_m(\theta)
    :=
    \E_{\xi}\!\left[
        \widehat{\mathcal L}_{\mathrm{SALD}}(\theta;\xi,m)
    \right],
    \label{eq:population-objective}
\end{equation}
where $\xi$ denotes all randomness used to construct a mini-batch,
including the sampled training examples, noise levels, and Gaussian noises.
Because the normalized AGD weights couple the examples through their batch
denominator, $F_m$ is the population objective induced by an entire random
mini-batch; it is not, in general, an expectation of independent
single-example objectives. The unbiasedness assumption below is satisfied by
Eq.~\eqref{eq:population-objective} whenever differentiation and expectation
can be interchanged.

\begin{theorem}[Conditional stationarity bound for the differentiable SALD objective]
Fix $m$. Assume that $F_m$ is lower bounded by $F_m^\star$ and has an
$L$-Lipschitz gradient. At iteration $s$, let
\begin{equation}
    g_s
    =
    \nabla_\theta
    \widehat{\mathcal L}_{\mathrm{SALD}}(\theta_s;\xi_s,m)
\end{equation}
be the stochastic gradient containing all gradient paths through the AGD
weights $\omega_i$ and the SAD kernels $M_i$. Suppose
\begin{align}
    \E[g_s\mid\theta_s]
    &=\nabla F_m(\theta_s),
    \label{eq:unbiased-gradient}\\
    \E\!\left[
        \norm{g_s-\nabla F_m(\theta_s)}_2^2
    \right]
    &\leq \nu^2.
    \label{eq:bounded-variance}
\end{align}
For the update
\begin{equation}
    \theta_{s+1}=\theta_s-\eta g_s,
    \qquad 0<\eta\leq\frac{1}{L},
\end{equation}
and a deterministic initial point $\theta_0$,
\begin{equation}
    \frac{1}{J}
    \sum_{s=0}^{J-1}
    \E\norm{\nabla F_m(\theta_s)}_2^2
    \leq
    \frac{2\bigl(F_m(\theta_0)-F_m^\star\bigr)}{\eta J}
    +L\eta\nu^2.
    \label{eq:stationarity-bound}
\end{equation}
\end{theorem}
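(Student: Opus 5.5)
The plan is the standard nonconvex SGD descent-lemma argument, applied to the fixed-memory population objective $F_m$ of Eq.~\eqref{eq:population-objective}. The structure of SALD (AGD weights, SAD kernels, and the stopped easy branch) enters only through the hypotheses. The $L$-smoothness of $F_m$ absorbs the curvature. The unbiasedness condition Eq.~\eqref{eq:unbiased-gradient} and the variance bound Eq.~\eqref{eq:bounded-variance} are taken for the full gradient $g_s$, including the paths through $\omega_i$ and $M_i$. So no component-specific calculation is needed. Because $m$ is fixed, $F_m$ is a single deterministic function along the whole trajectory.

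\textbf{Step 1 (descent inequality).} Since $\nabla F_m$ is $L$-Lipschitz, I apply the quadratic upper bound at $\theta_{s+1}=\theta_s-\eta g_s$:
\begin{equation*}
F_m(\theta_{s+1})\le F_m(\theta_s)-\eta\langle\nabla F_m(\theta_s),g_s\rangle+\tfrac{L\eta^2}{2}\norm{g_s}_2^2 .
\end{equation*}

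\textbf{Step 2 (conditional expectation).} I write $g_s=\nabla F_m(\theta_s)+e_s$ and condition on $\theta_s$. Unbiasedness gives $\E[\langle\nabla F_m(\theta_s),g_s\rangle\mid\theta_s]=\norm{\nabla F_m(\theta_s)}_2^2$. The cross term in $\norm{g_s}_2^2$ also vanishes, leaving $\E[\norm{g_s}_2^2\mid\theta_s]=\norm{\nabla F_m(\theta_s)}_2^2+\E[\norm{e_s}_2^2\mid\theta_s]$. Taking total expectation and applying Eq.~\eqref{eq:bounded-variance} then yields
\begin{equation*}
\E F_m(\theta_{s+1})\le \E F_m(\theta_s)-\eta\bigl(1-\tfrac{L\eta}{2}\bigr)\E\norm{\nabla F_m(\theta_s)}_2^2+\tfrac{L\eta^2\nu^2}{2}.
\end{equation*}
The step-size condition $\eta\le 1/L$ gives $1-L\eta/2\ge 1/2$, so the coefficient of the gradient term is at least $\eta/2$.

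\textbf{Step 3 (telescoping).} I sum the inequality over $s=0,\dots,J-1$. The function values telescope, and I bound $\E F_m(\theta_J)\ge F_m^\star$ using the lower bound. Since $\theta_0$ is deterministic, $\E F_m(\theta_0)=F_m(\theta_0)$. This gives $\tfrac{\eta}{2}\sum_s\E\norm{\nabla F_m(\theta_s)}_2^2\le F_m(\theta_0)-F_m^\star+\tfrac{JL\eta^2\nu^2}{2}$. Multiplying by $2/(\eta J)$ produces exactly Eq.~\eqref{eq:stationarity-bound}.

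\textbf{Main obstacle.} The only delicate point is measure-theoretic bookkeeping. The hypothesis Eq.~\eqref{eq:bounded-variance} is stated unconditionally, so I must keep the conditional zero-mean cross term separate from the variance term. Concretely, I first use the tower property to kill the cross term, and only then invoke the unconditional variance bound. I also need each expectation to be finite, which follows inductively from the quadratic upper bound and finite variance. I will add a remark that the hypotheses are genuinely assumptions here. Unbiasedness requires interchanging differentiation and expectation, and the batch-coupled denominator in Eq.~\eqref{eq:agd-weight} makes $F_m$ a mini-batch-level objective rather than a sum of per-sample objectives. Smoothness of $F_m$ is not derived from the network. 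The theorem is conditional on these hypotheses and says nothing about how $m$ evolves under TAM.
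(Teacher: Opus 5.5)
Your proposal is correct and follows the paper's proof essentially step for step: the descent lemma at $\theta_{s+1}=\theta_s-\eta g_s$, then unbiasedness to evaluate the inner product and remove the cross term in $\E\norm{g_s}_2^2$, then $1-L\eta/2\ge 1/2$, and finally telescoping with $F_m(\theta_J)\ge F_m^\star$. Your added care about integrability, and about applying the tower property before invoking the unconditional variance bound, goes slightly beyond the paper's write-up but does not change the argument.
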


\begin{proof}
Since $F_m$ is $L$-smooth, the descent lemma gives
\begin{align}
    F_m(\theta_{s+1})
    &\leq
    F_m(\theta_s)
    +\left\langle
        \nabla F_m(\theta_s),
        \theta_{s+1}-\theta_s
    \right\rangle
    +\frac{L}{2}\norm{\theta_{s+1}-\theta_s}_2^2
    \\
    &=
    F_m(\theta_s)
    -\eta\left\langle\nabla F_m(\theta_s),g_s\right\rangle
    +\frac{L\eta^2}{2}\norm{g_s}_2^2.
    \label{eq:descent-lemma-step}
\end{align}
By conditional unbiasedness,
\begin{equation}
    \E\!\left[
        \left\langle\nabla F_m(\theta_s),g_s\right\rangle
    \right]
    =
    \E\norm{\nabla F_m(\theta_s)}_2^2.
    \label{eq:inner-product-expectation}
\end{equation}
To control the second moment, define

\begin{equation}
    \zeta_s:=g_s-\nabla F_m(\theta_s).
\end{equation}

Equation~\eqref{eq:unbiased-gradient} implies
$\E[\zeta_s\mid\theta_s]=0$. Therefore, the cross term vanishes and
Eq.~\eqref{eq:bounded-variance} yields
\begin{align}
    \E\norm{g_s}_2^2
    &=
    \E\norm{\nabla F_m(\theta_s)+\zeta_s}_2^2
    \\
    &=
    \E\norm{\nabla F_m(\theta_s)}_2^2
    +\E\norm{\zeta_s}_2^2
    \\
    &\leq
    \E\norm{\nabla F_m(\theta_s)}_2^2+\nu^2.
    \label{eq:second-moment-bound}
\end{align}
Taking expectations in Eq.~\eqref{eq:descent-lemma-step} and applying
Eqs.~\eqref{eq:inner-product-expectation}--\eqref{eq:second-moment-bound}
gives
\begin{align}
    \E F_m(\theta_{s+1})
    &\leq
    \E F_m(\theta_s)
    -\eta\left(1-\frac{L\eta}{2}\right)
    \E\norm{\nabla F_m(\theta_s)}_2^2
    +\frac{L\eta^2}{2}\nu^2.
    \label{eq:one-step-descent}
\end{align}
Because $\eta\leq 1/L$,
$1-L\eta/2\geq 1/2$. Hence,
\begin{equation}
    \frac{\eta}{2}
    \E\norm{\nabla F_m(\theta_s)}_2^2
    \leq
    \E F_m(\theta_s)-\E F_m(\theta_{s+1})
    +\frac{L\eta^2}{2}\nu^2.
    \label{eq:rearranged-descent}
\end{equation}
Summing Eq.~\eqref{eq:rearranged-descent} from $s=0$ to $J-1$
telescopes the objective values:
\begin{equation}
    \frac{\eta}{2}
    \sum_{s=0}^{J-1}
    \E\norm{\nabla F_m(\theta_s)}_2^2
    \leq
    F_m(\theta_0)-\E F_m(\theta_J)
    +\frac{JL\eta^2}{2}\nu^2.
\end{equation}
Using $F_m(\theta_J)\geq F_m^\star$, dividing by
$\eta J/2$, and rearranging gives
Eq.~\eqref{eq:stationarity-bound}.
\end{proof}

\begin{remark}[Meaning of the bound]
For a constant stepsize and nonzero stochastic variance, the right-hand side
of Eq.~\eqref{eq:stationarity-bound} approaches the nonzero error floor
$L\eta\nu^2$. Thus, the displayed inequality guarantees convergence to a
stationarity neighborhood, not necessarily to an exact stationary point. A
vanishing average gradient norm follows when $\nu=0$, or when the stepsize
is decreased appropriately. For example, choosing
$\eta=\Theta(J^{-1/2})$, while respecting $\eta\leq1/L$, gives the
standard $\mathcal O(J^{-1/2})$ stationarity rate.
\end{remark}

\begin{remark}[Fixed-memory scope]
The telescoping argument uses one fixed objective $F_m$. During the full
SALD procedure, TAM changes the memory from $m_s$ to $m_{s+1}$, so the
corresponding objective may also change from $F_{m_s}$ to $F_{m_{s+1}}$.
Theorem~1 therefore provides a convergence result conditional on a fixed TAM
state. Extending it to the complete online-memory dynamics would require an
additional assumption controlling the objective drift caused by each memory
update, or a separate analysis of the joint state $(\theta_s,m_s)$.
\end{remark}

\begin{remark}[What is not implied]
The theorem assumes smoothness, unbiasedness, and bounded variance for the
actual differentiable SALD gradient, including the paths through $\omega_i$
and $M_i$. It neither proves these assumptions for a particular neural
network nor shows that SALD converges faster than uniform flow matching. It
also does not imply that the SALD and baseline objectives have the same
stationary points.
\end{remark}

%%%%%%%%%%%%%%%%%%%%%%%%%%%%%%%%%%%%%%%%
\section{Additional Results}
\label{sec:addi_res}
%%%%%%%%%%%%%%%%%%%%%%%%%%%%%%%%%%%%%%%%

In this section, we present additional qualitative and quantitative results
to further analyze SALD's behavior across different datasets, architectures,
and training settings. We include extended visual comparisons and
supplementary experiments that highlight the contributions of AGD, TAM, and
SAD to generation quality, optimization stability, and fine-grained detail
preservation.

Algorithm~\ref{alg:sald-training} summarizes a single SALD training step. Given a mini-batch of
image--caption pairs, we first determine a base noise-level gap using a simple
curriculum: during warm-up, the gap increases from $1$ to
$\Delta$, after which it is sampled uniformly from
$\{1,\ldots,\Delta\}$. This base gap is then modulated for each
sample using Temporal Advantage Memory (TAM), which scales the gap according
to the sample's stored difficulty score and clips it to the valid range. For
each example, we sample a lower-noise easy level and obtain the corresponding
higher-noise hard level by adding the TAM-adjusted gap. Two noisy views of the
same latent are then constructed using independent Gaussian perturbations.
The same model performs two forward passes: the easy path is evaluated under
\texttt{no\_grad} and serves as an internal reference, whereas the hard path
remains differentiable and supplies the training gradient. From the residuals
of these two paths, Spectral Advantage Decomposition (SAD) compares their
frequency-domain error magnitudes and maps a zero-phase spectral gate back to
the latent coordinates to obtain a frequency-derived weighting kernel. Since
this construction does not retain the residual phase or recover the locations
of the original errors, the kernel should not be interpreted as a localized
pixel-level advantage map. Instead, it reweights the hard-path residual loss
using the relative spectral errors of the two paths. In parallel,
Advantage-Guided Diffusion (AGD) defines the sample advantage as the
difference between the differentiable hard-path loss and the stop-gradient
easy-path reference loss. It converts this advantage into a normalized sample
weight that emphasizes harder examples within the batch. Finally, TAM stores
the numerical advantage for future visits to the sample; the memory is
non-trainable, and no gradient is propagated through this state update. The
SALD objective is then formed by averaging the sample-weighted SAD losses of
the hard paths before backpropagation. In this way, the algorithm adapts the
training signal across samples, noise levels, and spectral components through
a dynamic, self-referenced curriculum.

%%%%%%%%%%%%%%%%%%%%%%%%%%%%%%%%
\begin{algorithm}[t]
\caption{SALD Training Step}
\label{alg:sald-training}
\small
\begin{algorithmic}[1]
\Require Mini-batch $\{(x_0^{(i)},c^{(i)},\operatorname{idx}^{(i)})\}_{i=1}^{B}$
\Require Noise levels $0\leq\sigma_0<\cdots<\sigma_{T-1}\leq1$; current step $s$; $\tau,\tau_s,\alpha,\lambda$; $K>1$; $1\leq\Delta_{\max}<T$
\Statex \textbf{Global:} non-trainable TAM memory $m\in\mathbb{R}^{N}$ (initialized to zero); $D=CHW$
\Statex \textbf{Notation:} subscript $a$ denotes the easy path ($t_{\mathrm e}=\sigma_{k_a}$) and $b$ the hard path ($t_{\mathrm h}=\sigma_{k_b}$)
\If{$s<K$}
    \State $\delta_{\mathrm{base}}\gets 1+\left\lfloor\dfrac{(\Delta_{\max}-1)s}{K-1}\right\rfloor$
\Else
    \State $\delta_{\mathrm{base}}\sim\operatorname{Uniform}\{1,\ldots,\Delta_{\max}\}$
\EndIf
\For{$i=1,\ldots,B$}
    \State $\delta^{(i)}\gets\clip\!\left(\operatorname{round}\!\left[\delta_{\mathrm{base}}
    \bigl(1+\lambda\tanh(m_{\operatorname{idx}^{(i)}})\bigr)\right],
    1,\Delta_{\max}\right)$ \Comment{TAM gap}
    \State $u^{(i)}\sim\LogitNormal$,
    $\quad k_a^{(i)}\gets\left\lfloor u^{(i)}(T-\delta^{(i)})\right\rfloor$,
    $\quad k_b^{(i)}\gets k_a^{(i)}+\delta^{(i)}$
    \State $\epsilon_a^{(i)},\epsilon_b^{(i)}\sim\mathcal{N}(0,I)$
    \State $z_{k_a}^{(i)}\gets(1-\sigma_{k_a^{(i)}})x_0^{(i)}+\sigma_{k_a^{(i)}}\epsilon_a^{(i)}$
    \State $z_{k_b}^{(i)}\gets(1-\sigma_{k_b^{(i)}})x_0^{(i)}+\sigma_{k_b^{(i)}}\epsilon_b^{(i)}$
    \State $\widehat v_a^{(i)}\gets
    \operatorname{no\_grad}\!\left[v_\theta(z_{k_a}^{(i)},k_a^{(i)},c^{(i)})\right]$
    \Comment{easy path; no gradient}
    \State $\widehat v_b^{(i)}\gets v_\theta(z_{k_b}^{(i)},k_b^{(i)},c^{(i)})$
    \Comment{hard path; gradient on}
    \State $r_a^{(i)}\gets\widehat v_a^{(i)}-(\epsilon_a^{(i)}-x_0^{(i)})$
    \State $r_b^{(i)}\gets\widehat v_b^{(i)}-(\epsilon_b^{(i)}-x_0^{(i)})$
    \State $M^{(i)}\gets\operatorname{SAD}(r_a^{(i)},r_b^{(i)},\tau_s)$
    \Comment{not detached; mean approximately one}
    \State $\ell_{\mathrm{sad}}^{(i)}\gets\dfrac{w(\sigma_{k_b^{(i)}})}{D}
    \sum_{p=1}^{D}M_p^{(i)}\bigl(r_{b,p}^{(i)}\bigr)^2$
    \State $\ell_a^{(i)}\gets\dfrac{w(\sigma_{k_a^{(i)}})}{D}
    \lVert r_a^{(i)}\rVert_2^2$
    \Comment{easy loss; no gradient}
    \State $\ell_b^{(i)}\gets\dfrac{w(\sigma_{k_b^{(i)}})}{D}
    \lVert r_b^{(i)}\rVert_2^2$
    \Comment{hard loss; gradient on}
    \State $A^{(i)}\gets\ell_b^{(i)}-\sg\!\left(\ell_a^{(i)}\right)$
    \Comment{only the easy loss is detached}
\EndFor
\For{$i=1,\ldots,B$}
    \State $\omega^{(i)}\gets\dfrac{B\,\operatorname{sigmoid}(A^{(i)}/\tau)}
    {\sum_{j=1}^{B}\operatorname{sigmoid}(A^{(j)}/\tau)}$
    \Comment{not detached}
\EndFor
\For{$i=1,\ldots,B$}
    \State $m_{\operatorname{idx}^{(i)}}\gets
    \alpha m_{\operatorname{idx}^{(i)}}+(1-\alpha)\,\sg\!\left(A^{(i)}\right)$
    \Comment{numerical value only; no graph}
\EndFor
\State $\mathcal{L}_{\mathrm{SALD}}\gets\dfrac{1}{B}
\sum_{i=1}^{B}\omega^{(i)}\ell_{\mathrm{sad}}^{(i)}$
\State Backpropagate $\nabla_\theta\mathcal{L}_{\mathrm{SALD}}$ and update $\theta$
\end{algorithmic}
\end{algorithm}
%%%%%%%%%%%%%%%%%%%%%%%%%%%%%%%%

%%%%%%%%%%%%%%%%%%%%%%%%%%%%%%%%%%%%%%%%
\subsection{Training and Evaluation Configuration}
\label{sec:training_config}
%%%%%%%%%%%%%%%%%%%%%%%%%%%%%%%%%%%%%%%%

%%%%%%%%%%%%%%%%%%%%%%%%%%%%%%%
\begin{table}[t]
\centering
\small
\caption{Detailed model configurations used in our experiments, including parameter count, per-forward-pass FLOPs, training resolution, learning rate, guidance scale, and inference steps. $^\dagger$ denotes models additionally evaluated at $1024\!\times\!1024$ resolution.}
\label{tab:config}
\setlength{\tabcolsep}{2.5mm}
\renewcommand{\arraystretch}{1.1}
\begin{tabular}{l|c|c|c|c|c|c}
\toprule
\textbf{Model} & \textbf{Params (B)} & \textbf{FLOPs (G)} & \textbf{Resolution} & \textbf{LR} & \textbf{Guidance} & \textbf{Infer. Steps} \\
\midrule
\textbf{SANA (0.6B)}             & 0.6 & 548  & $512\times512$   & 1e-4 & 5   & 20 \\
\textbf{SANA (1.6B)}             & 1.6 & 567  & $512\times512$   & 1e-4 & 5   & 20 \\
\textbf{PixArt-$\Sigma$}        & 0.6 & 567  & $512\times512$   & 2e-5 & 5 & 20 \\
\midrule
\textbf{SANA (0.6B)$^\dagger$ }  & 0.6 & 1397 & $1024\times1024$ & 1e-4 & 5   & 20 \\
\textbf{SANA (1.6B)$^\dagger$ }  & 1.6 & 2240 & $1024\times1024$ & 1e-4 & 5   & 20 \\
\textbf{PixArt-$\Sigma^\dagger$ }& 0.6 & 1399 & $1024\times1024$ & 2e-5 & 5 & 20 \\
\textbf{FLUX.2 klein base }      & 4.0 & 6953 & $1024\times1024$ & 1e-5 & 4   & 50 \\
\bottomrule
\end{tabular}
\end{table}
\begin{figure*}[h]
  \centering
  \includegraphics[width=\textwidth]{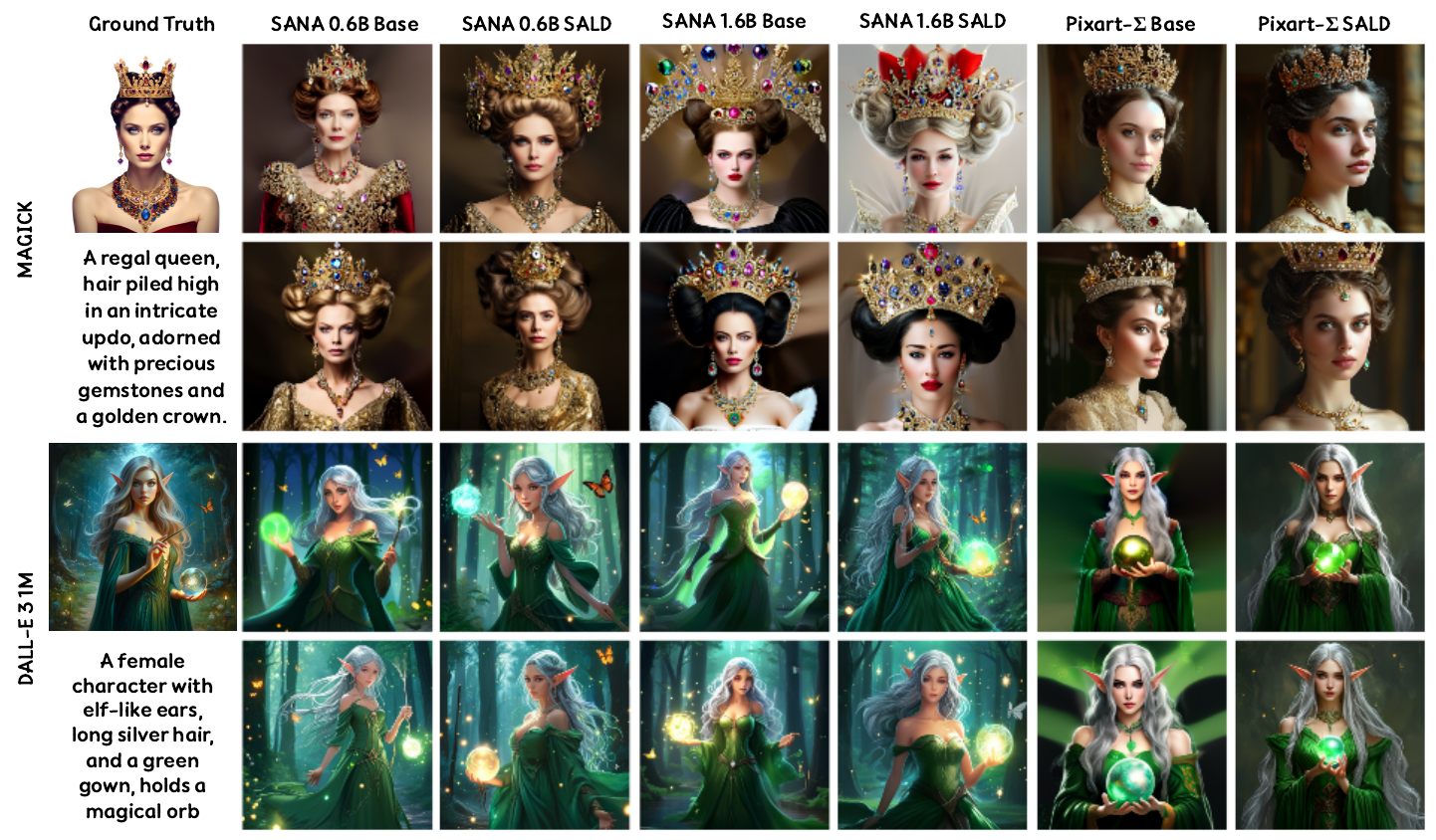}
  \caption{Qualitative comparison across baseline models and SALD on 1024 × 1024 datasets.}
  \label{fig:random3}
  \vspace{-5mm}
\end{figure*}
%%%%%%%%%%%%%%%%%%%%%%%%%%%%%%%
\begin{figure}[t]
  \centering
  \includegraphics[width=\textwidth]{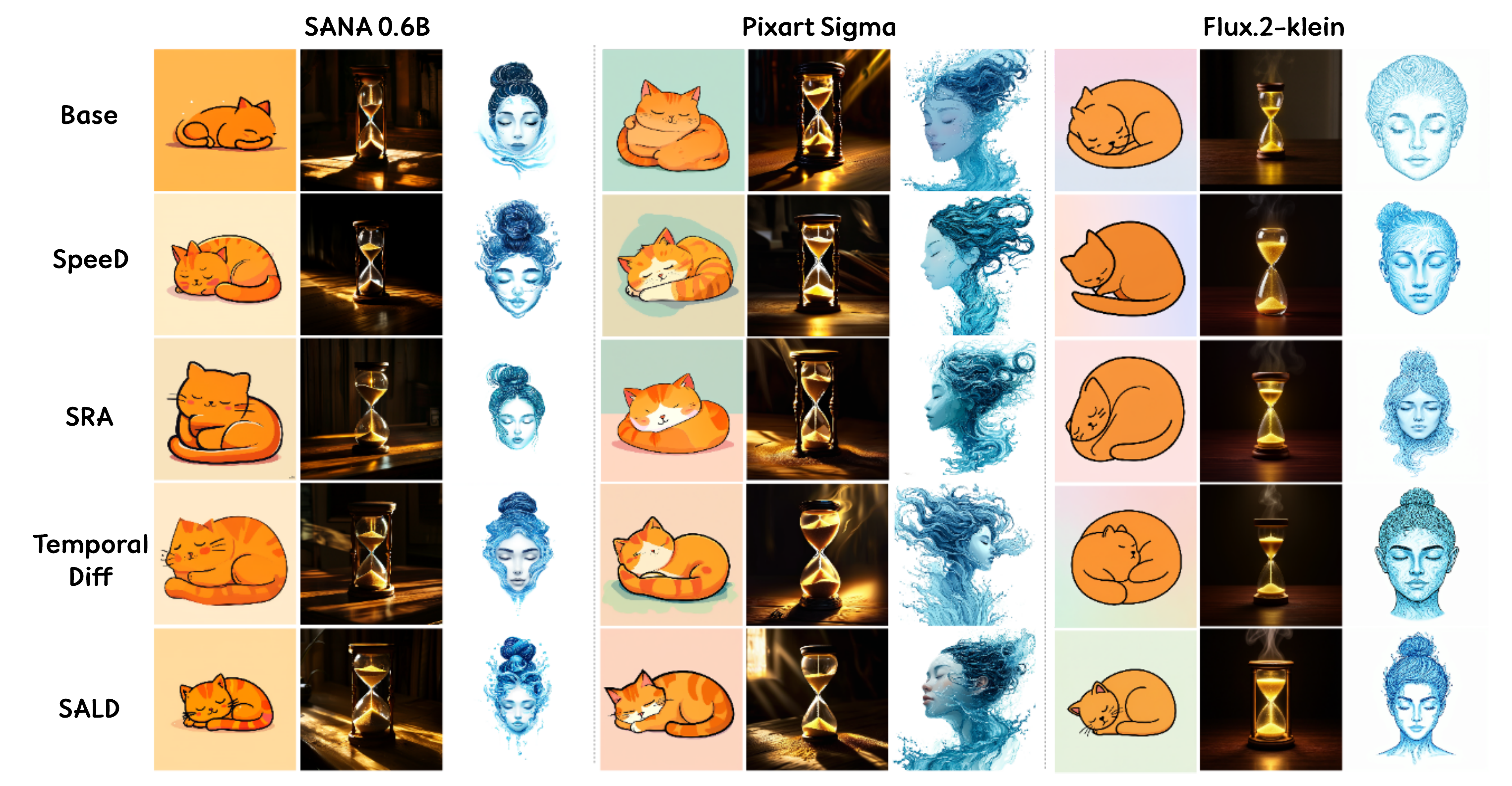}
  \caption{Qualitative comparisons: samples generated by SALD and other different Baselines on 1024 × 1024 Datasets. Captions are, Column-1: A cute orange cat sleeping curled up, flat vector illustration style. Column-2: An hourglass with golden sand, dramatic cinematic lighting on a wooden table. Column-3: A woman's face in profile with flowing blue hair made of water, digital art.}
  \label{fig:baseline_comp_1024}
\end{figure}
%%%%%%%%%%%%%%%%%%%%%%%%%%%%%%%%%%%%%%%%%%%%%%%%%%%%
%%%%%%%%%%%%%%%%%%%%%%%%%%%%%%%
\begin{figure*}[t]
  \centering
  \includegraphics[width=\textwidth]{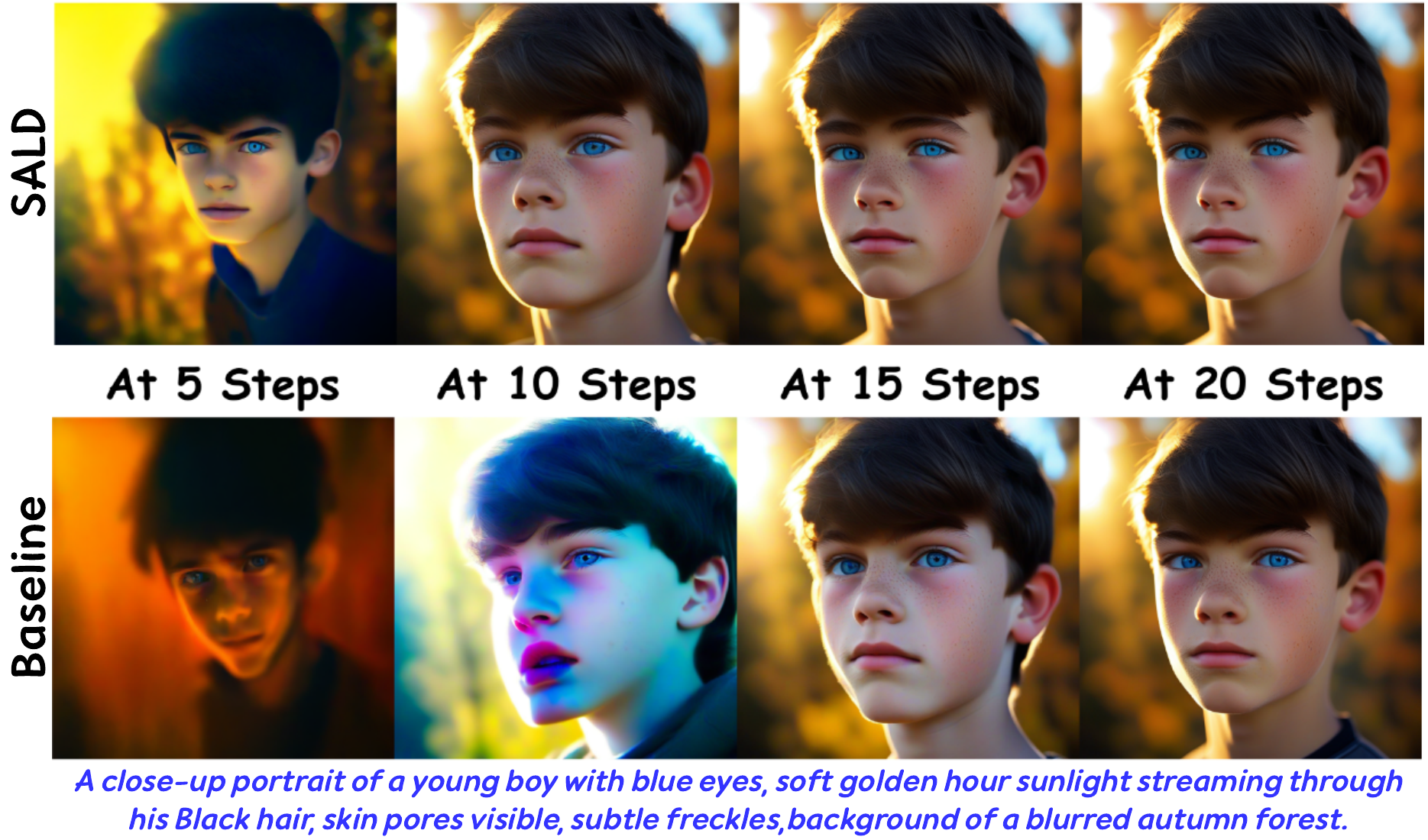}
  \caption{Qualitative comparison of inference-time generation across different sampling steps for SALD and the baseline using SANA 1.6B trained on DALL$\cdot$E~3~1M.}
  \label{fig:c1}
  \vspace{-5mm}
\end{figure*}
%%%%%%%%%%%%%%%%%%%%%%%%%%%%%%%

%%%%%%%%%%%%%%%%%%%%%%%%%%%%%%%
\begin{table}[t]
\vspace{-2mm}
% \tiny
\caption{Wall-clock time and quantitative comparison on CUB-200 and MultiGen datasets. Best values in \textcolor{blue}{Blue}}
\vspace{-2mm}
\label{tab:time_result}
\centering
\scriptsize
\resizebox{0.9\linewidth}{!}{%
\begin{tabular}{c l c c c | c c c}
\toprule
\textbf{Dataset} & \textbf{Model} & \textbf{Variant} & \textbf{Params (B)} & \textbf{Steps} & \textbf{FID}$\downarrow$ & \textbf{CLIP}$\uparrow$ & \textbf{LPIPS}$\downarrow$ \\
\midrule

% -------- CUB-200 --------
\multirow{15}{*}{\rotatebox{90}{\textbf{CUB-200}}}
& \texttt{\textbf{SANA}} & Base & 1.6 & 12.5k & 21.47 & 26.01 & 0.7709 \\
& & SpeeD & 1.6 & 12.5k & 20.83 & 26.08 & 0.7670 \\
& & SRA & 1.6 & 12.5k & 20.72 & 26.15 & 0.7632 \\
& & Temporal Diff & 1.6 & 12.5k & 20.61 & 26.22 & 0.7593 \\
& & SALD & 1.6 & \textbf{10k} & \textcolor{blue}{\textbf{18.16}} & \textcolor{blue}{\textbf{26.29}} & \textcolor{blue}{\textbf{0.7537}} \\
\addlinespace
& \texttt{\textbf{SANA}} & Base & 0.6 & 12.5k & 25.54 & 26.19 & 0.7802 \\
& & SpeeD & 0.6 & 12.5k & 24.77 & 26.26 & 0.7763 \\
& & SRA & 0.6 & 12.5k & 24.65 & 26.33 & 0.7724 \\
& & Temporal Diff & 0.6 & 12.5k & 24.52 & 26.40 & 0.7705 \\
& & SALD & 0.6 & \textbf{10k} & \textcolor{blue}{\textbf{23.70}} & \textcolor{blue}{\textbf{26.61}} & \textcolor{blue}{\textbf{0.7686}} \\
\addlinespace
& \texttt{\textbf{PixArt-$\Sigma$}} & Base & 0.6 & 12.5k & 21.57 & 29.27 & 0.7886 \\
& & SpeeD & 0.6 & 12.5k & 20.92 & 29.35 & 0.7847 \\
& & SRA & 0.6 & 12.5k & 20.81 & 29.42 & 0.7808 \\
& & Temporal Diff & 0.6 & 12.5k & 20.70 & 29.50 & 0.7768 \\
& & SALD & 0.6 & \textbf{10k} & \textcolor{blue}{\textbf{20.50}} & \textcolor{blue}{\textbf{29.55}} & \textcolor{blue}{\textbf{0.7452}} \\

\midrule

% -------- MultiGen --------
\multirow{15}{*}{\rotatebox{90}{\textbf{MultiGen}}}
& \texttt{\textbf{SANA}} & Base & 1.6 & 12.5k & 25.87 & 26.26 & 0.7593 \\
& & SpeeD & 1.6 & 12.5k & 25.09 & 26.33 & 0.7555 \\
& & SRA & 1.6 & 12.5k & 24.96 & 26.40 & 0.7517 \\
& & Temporal Diff & 1.6 & 12.5k & 24.84 & 26.47 & 0.7550 \\
& & SALD & 1.6 & \textbf{10k} & \textcolor{blue}{\textbf{23.86}} & \textcolor{blue}{\textbf{26.29}} & \textcolor{blue}{\textbf{0.7533}} \\
\addlinespace
& \texttt{\textbf{SANA}} & Base & 0.6 & 12.5k & 31.89 & 26.68 & 0.7648 \\
& & SpeeD & 0.6 & 12.5k & 30.93 & 26.75 & 0.7610 \\
& & SRA & 0.6 & 12.5k & 30.77 & 26.82 & 0.7572 \\
& & Temporal Diff & 0.6 & 12.5k & 30.61 & 26.89 & 0.7534 \\
& & SALD & 0.6 & \textbf{10k} & \textcolor{blue}{\textbf{29.15}} & \textcolor{blue}{\textbf{26.69}} & \textcolor{blue}{\textbf{0.7472}} \\
\addlinespace
& \texttt{\textbf{PixArt-$\Sigma$}} & Base & 0.6 & 12.5k & 31.57 & 25.88 & 0.7702 \\
& & SpeeD & 0.6 & 12.5k & 30.62 & 25.95 & 0.7664 \\
& & SRA & 0.6 & 12.5k & 30.46 & 26.02 & 0.7626 \\
& & Temporal Diff & 0.6 & 12.5k & 30.31 & 26.09 & 0.7588 \\
& & SALD & 0.6 & \textbf{10k} & \textcolor{blue}{\textbf{30.47}} & \textcolor{blue}{\textbf{25.93}} & \textcolor{blue}{\textbf{0.7292}} \\

\bottomrule
\end{tabular}%
}
% \vspace{-5mm}
\end{table}
%%%%%%%%%%%%%%%%%%%%%%%%%%%%%%%

%%%%%%%%%%%%%%%%%%%%%%%%%%%%%%%
\begin{table}[t]
\tiny
\caption{
Quantitative comparison across different inference steps on the Oxford Flowers dataset. Light blue highlights indicate the metric values for the \textbf{SALD} variant.
}
\vspace{-2mm}
\label{tab:steps_comparison}
\centering
\small
\resizebox{\linewidth}{!}{%
\begin{tabular}{l c c | c c c | c c c | c c c}
\toprule
\textbf{Model} & \textbf{Params (B)} & \textbf{Variant} 
& \multicolumn{3}{c|}{\textbf{Steps = 10}} 
& \multicolumn{3}{c|}{\textbf{Steps = 15}} 
& \multicolumn{3}{c}{\textbf{Steps = 20}} \\
\cmidrule(lr){1-3} \cmidrule(lr){4-6} \cmidrule(lr){7-9} \cmidrule(lr){10-12}
& & & \textbf{FID}$\downarrow$ & \textbf{CLIP}$\uparrow$ & \textbf{LPIPS}$\downarrow$ 
& \textbf{FID}$\downarrow$ & \textbf{CLIP}$\uparrow$ & \textbf{LPIPS}$\downarrow$ 
& \textbf{FID}$\downarrow$ & \textbf{CLIP}$\uparrow$ & \textbf{LPIPS}$\downarrow$ \\
\midrule

% -------- Oxford Flowers --------
\texttt{\textbf{SANA}} & 1.6 & Base & 55.06 & 25.35 & 0.8153 & 37.49 & 25.60 & 0.8055 & 33.11 & 26.35 & 0.7896 \\
\rowcolor{blue!15}
\cellcolor{white} & \cellcolor{white} & \cellcolor{white}\textbf{SALD} & 41.56 & 25.48 & 0.8088 & 30.60 & 25.72 & 0.7962 & 27.37 & 26.57 & 0.7781 \\
\addlinespace
\texttt{\textbf{SANA}} & 0.6 & Base & 44.05 & 24.31 & 0.8059 & 35.26 & 25.29 & 0.7971 & 30.64 & 26.43 & 0.7805 \\
\rowcolor{blue!15}
\cellcolor{white} & \cellcolor{white} & \cellcolor{white}\textbf{SALD} & 33.68 & 24.48 & 0.7906 & 25.54 & 25.63 & 0.7865 & 25.83 & 26.65 & 0.7728 \\
\addlinespace
\texttt{\textbf{PixArt-$\Sigma$}} & 0.6 & Base & 49.12 & 25.10 & 0.8120 & 43.25 & 25.25 & 0.7910 & 37.88 & 25.37 & 0.7687 \\
\rowcolor{blue!15}
\cellcolor{white} & \cellcolor{white} & \cellcolor{white}\textbf{SALD} & 46.35 & 25.28 & 0.7985 & 40.80 & 25.38 & 0.7820 & 36.48 & 25.42 & 0.7682 \\

\bottomrule
\end{tabular}}%

\vspace{-3mm}
\end{table}
%%%%%%%%%%%%%%%%%%%%%%%%%%%%%%%

%%%%%%%%%%%%%%%%%%%%%%%%%%%%%%%%%%%%%%%%%%%%%%%%%%%%%%%%%%%%%%%%%

Table~\ref{tab:config} summarizes the implementation details of all backbone architectures used in our experiments. We evaluate SALD on SANA, PixArt-$\Sigma$, and FLUX.2 variants across multiple parameter scales and image resolutions to demonstrate the generality of the proposed framework. The table reports the number of trainable parameters, computational complexity measured in FLOPs per forward pass, training resolution, the learning rate used for finetuning, as well as the inference-time guidance scale and number of sampling steps. Experiments are conducted at both $512\times512$ and $1024\times1024$ resolutions, where higher resolutions naturally require substantially higher computational cost due to increased latent spatial dimensions. All models are trained using bfloat16 (bf16) mixed-precision training with gradient clipping set to a maximum norm of $1.0$. Following the original training configurations of each architecture, we employ a constant with warmup learning rate scheduler with $500$ warmup steps, after which the learning rate is kept constant. We use relatively larger learning rates for SANA models and more conservative rates for PixArt variants to ensure stable optimization. These settings are kept consistent across baseline and SALD experiments for fair comparison.
For evaluation, we use the following implementations: FID is computed using \href{https://github.com/Lightning-AI/torchmetrics/tree/master}{TorchMetrics}, CLIP-based similarity is evaluated using \href{https://huggingface.co/openai/clip-vit-large-patch14}{CLIP ViT-L/14}, and LPIPS is computed using the \href{https://github.com/richzhang/perceptualsimilarity}{Perceptual Similarity (LPIPS)} implementation.

%%%%%%%%%%%%%%%%%%%%%%%%%%%%%%%%%%%%%%%%%%%%%%%%%%
\subsection{Wall-Clock Evaluation}
We further evaluate the training efficiency of SALD under a matched wall-clock budget. As shown in Table~\ref{tab:time_result}, SALD incurs approximately $25\%$ additional wall-clock time per training step compared with the corresponding baseline, measured on the same NVIDIA RTX A6000 setup. This overhead comes from the additional easy noise reference path: SALD performs gradient computation and backpropagation only for the hard noise path, while the easy noise path is evaluated without gradient computation, requiring only an additional forward pass. To account for this overhead, SALD is trained for $10$k steps, while the baselines are trained for $12.5$k steps. Thus, the total training budgets are approximately matched, since $10\text{k}\times1.25\approx12.5\text{k}$. As reported in Table~\ref{tab:time_result}, under this comparable wall-clock budget, SALD consistently achieves better generation quality across both CUB-200 and MultiGen datasets. In particular, SALD improves FID, CLIP, and LPIPS over the corresponding baseline models across all evaluated backbone configurations. These results show that the limited additional cost introduced by the no-gradient reference path is compensated by SALD's faster convergence, allowing it to achieve superior generation quality within approximately the same overall training time as the baselines.

%%%%%%%%%%%%%%%%%%%%%%%%%%%%%%%%%%%%%%%%%%%%%%
\subsection{Impact of Inference Steps and Model Scalability}

Table~\ref{tab:steps_comparison} reveals a consistent sampling-efficiency advantage of SALD across model scales, from SANA 0.6B/1.6B to PixArt-$\Sigma$ 0.6B, on Oxford Flowers. As the number of inference steps is reduced, SALD preserves generation quality more effectively than the corresponding Base models, yielding lower FID and LPIPS while retaining comparable or higher CLIP scores. In particular, for both SANA models, SALD at 15 steps attains lower FID than the Base model at 20 steps (30.60 vs.\ 33.11 for SANA 1.6B and 25.54 vs.\ 30.64 for SANA 0.6B), using 25\% fewer sampling steps, although LPIPS and CLIP at 15 steps remain slightly below the 20-step Base values.

The behavior is consistent with the relative error signal introduced in Eq.~\ref{eq:advantage}. Instead of treating prediction errors at different noise states independently, SALD contrasts the hard and easy states of the same sample and assigns greater importance when the harder state exhibits a larger error. Through the AGD weighting in Eq.~\ref{eq:agd-weight}, optimization therefore concentrates on trajectory regions where prediction remains comparatively difficult. This encourages more accurate learning of the velocity field along challenging portions of the noise trajectory, which in turn reduces the dependence on densely spaced inference updates. The cross-scale results in Table~\ref{tab:steps_comparison} suggest that this effect is not specific to a particular architecture size, but translates into consistently stronger quality--efficiency trade-offs across the evaluated models.

Furthermore, Figure~\ref{fig:c1} provides a qualitative view of this faster inference-time convergence using SANA 1.6B trained on DALL$\cdot$E~3~1M. The distinction is most pronounced at low sampling budgets. SALD establishes the correct facial structure and overall appearance within the first few steps, with the 10-step output already closely resembling its 15- and 20-step generations. In contrast, the baseline remains highly unstable at 5 and 10 steps, exhibiting substantial changes in geometry, illumination, and color before converging toward a consistent image at later steps. Thus, additional sampling primarily refines the SALD output rather than correcting its global structure, whereas the baseline requires more iterations to stabilize both structure and appearance. This visual progression further supports that SALD learns a more efficient generation trajectory, allowing coherent samples to emerge under substantially fewer inference steps.

%%%%%%%%%%%%%%%%%%%%%%%%%%%%%%%%%%%%%%%%%%%%%%
\begin{table}[htbp]
    \centering
    \caption{Evaluation metrics across training steps for SANA 1.6 and SANA 0.6 on MultiGen and Oxford 102 Flowers datasets. Lower FID and LPIPS ($\downarrow$) are better, while higher CLIP ($\uparrow$) is better.}
    \label{tab:eval_sana_datasets}
    \vspace{-2mm}
    \renewcommand{\arraystretch}{1.1}
    \resizebox{\textwidth}{!}{%
    \begin{tabular}{c c | ccc | ccc | ccc | ccc}
        \toprule
        
        \multirow{3}{*}{\textbf{Steps}} 
        & \multirow{3}{*}{\textbf{Variant}} 
        & \multicolumn{6}{c|}{\textbf{MultiGen}} 
        & \multicolumn{6}{c}{\textbf{Oxford 102 Flowers}} \\
        \cmidrule(lr){3-8} \cmidrule(lr){9-14}
        
        & & \multicolumn{3}{c|}{\textbf{SANA 1.6}} 
        & \multicolumn{3}{c|}{\textbf{SANA 0.6}} 
        & \multicolumn{3}{c|}{\textbf{SANA 1.6}} 
        & \multicolumn{3}{c}{\textbf{SANA 0.6}} \\
        \cmidrule(lr){3-5} \cmidrule(lr){6-8} \cmidrule(lr){9-11} \cmidrule(lr){12-14}
        
        & & FID & CLIP & LPIPS 
        & FID & CLIP & LPIPS 
        & FID & CLIP & LPIPS 
        & FID & CLIP & LPIPS \\
        \midrule
        
        \multirow{2}{*}{10k}  & Baseline & 26.44 & 26.25 & 0.7541 & 31.93 & 26.68 & 0.7495 & 33.11 & 26.35 & 0.7896 & 30.64 & 26.43 & 0.7805 \\
                              & SALD     & 23.86 & 26.29 & 0.7533 & 29.15 & 26.69 & 0.7472 & 27.37 & 26.57 & 0.7781 & 25.83 & 26.65 & 0.7728 \\
        \midrule
        \multirow{2}{*}{20k}  & Baseline & 26.18 & 26.31 & 0.7550 & 31.42 & 26.48 & 0.7605 & 32.45 & 26.30 & 0.7842 & 30.41 & 26.38 & 0.7761 \\
                              & SALD     & 23.52 & 26.35 & 0.7542 & 28.78 & 26.52 & 0.7598 & 26.80 & 26.52 & 0.7745 & 25.70 & 26.60 & 0.7695 \\
        \midrule
        \multirow{2}{*}{30k}  & Baseline & 25.90 & 26.19 & 0.7548 & 31.10 & 26.47 & 0.7588 & 31.82 & 26.24 & 0.7791 & 30.19 & 26.32 & 0.7719 \\
                              & SALD     & 23.21 & 26.38 & 0.7540 & 28.45 & 26.50 & 0.7580 & 26.35 & 26.46 & 0.7712 & 25.58 & 26.54 & 0.7665 \\
        \midrule
        \multirow{2}{*}{40k}  & Baseline & 25.65 & 26.15 & 0.7539 & 30.87 & 26.43 & 0.7588 & 31.25 & 26.18 & 0.7743 & 29.98 & 26.26 & 0.7680 \\
                              & SALD     & 22.95 & 26.40 & 0.7535 & 28.15 & 26.48 & 0.7575 & 25.98 & 26.40 & 0.7682 & 25.47 & 26.48 & 0.7638 \\
        \midrule
        \multirow{2}{*}{50k}  & Baseline & 25.47 & 26.08 & 0.7537 & 30.29 & 26.39 & 0.7596 & 30.75 & 26.12 & 0.7698 & 29.79 & 26.20 & 0.7643 \\
                              & SALD     & 22.73 & 26.41 & 0.7532 & 28.63 & 26.41 & 0.7576 & 25.68 & 26.34 & 0.7655 & 25.38 & 26.41 & 0.7613 \\
        \midrule
        \multirow{2}{*}{60k}  & Baseline & 25.24 & 26.11 & 0.7518 & 30.75 & 26.44 & 0.7584 & 30.30 & 26.06 & 0.7656 & 29.61 & 26.14 & 0.7609 \\
                              & SALD     & 22.55 & 26.42 & 0.7520 & 28.35 & 26.45 & 0.7570 & 25.42 & 26.27 & 0.7630 & 25.30 & 26.34 & 0.7590 \\
        \midrule
        \multirow{2}{*}{70k}  & Baseline & 25.08 & 26.11 & 0.7520 & 30.86 & 26.44 & 0.7586 & 29.90 & 26.00 & 0.7617 & 29.45 & 26.08 & 0.7577 \\
                              & SALD     & 22.40 & 26.42 & 0.7518 & 28.18 & 26.46 & 0.7568 & 25.20 & 26.20 & 0.7607 & 25.23 & 26.27 & 0.7569 \\
        \midrule
        \multirow{2}{*}{80k}  & Baseline & 24.94 & 26.11 & 0.7522 & 30.02 & 26.42 & 0.7587 & 29.55 & 25.94 & 0.7581 & 29.30 & 26.02 & 0.7547 \\
                              & SALD     & 22.28 & 26.42 & 0.7517 & 28.05 & 26.45 & 0.7565 & 25.02 & 26.12 & 0.7586 & 25.18 & 26.19 & 0.7550 \\
        \midrule
        \multirow{2}{*}{90k}  & Baseline & 24.82 & 26.11 & 0.7521 & 30.02 & 26.43 & 0.7586 & 29.25 & 25.89 & 0.7548 & 29.16 & 25.96 & 0.7519 \\
                              & SALD     & 22.18 & 26.42 & 0.7516 & 27.95 & 26.44 & 0.7562 & 24.95 & 26.04 & 0.7567 & 25.15 & 26.11 & 0.7533 \\
        \midrule
        \multirow{2}{*}{100k} & Baseline & 24.70 & 26.11 & 0.7522 & 30.00 & 26.43 & 0.7586 & 28.61 & 25.84 & 0.7378 & 28.71 & 25.84 & 0.7373 \\
                              & SALD     & 22.12 & 26.21 & 0.7531 & 28.04 & 26.39 & 0.7539 & 24.90 & 25.75 & 0.7393 & 25.14 & 25.93 & 0.7335 \\
        \bottomrule
    \end{tabular}%
    }
    \vspace{-2mm}
\end{table}

%%%%%%%%%%%%%%%%%%%%%%%%%%%%%%%
\subsection{What happens if we train 10$\times$ More?}
\label{appen:convergence}

To examine whether the gains reported at the main training budget are a consequence of insufficient training of the Baseline, we extend both the Baseline and SALD models to $100$k training steps, corresponding to $10\times$ the standard $10$k-step training budget. Table~\ref{tab:eval_sana_datasets} reports the results across SANA 1.6B and SANA 0.6B on the MultiGen and Oxford 102 Flowers datasets. SALD maintains a consistent FID advantage over the corresponding Baseline throughout the extended training horizon, while CLIP and LPIPS remain comparable. For example, at $10$k steps, SALD already achieves substantially lower FID than the Baseline, reducing it from $26.44$ to $23.86$ on MultiGen and from $33.11$ to $27.37$ on Oxford 102 Flowers with SANA 1.6B. Importantly, this advantage does not disappear as training continues, demonstrating that the improvements observed at the main $10$k-step setting are not simply a consequence of under-training the Baseline.

The extended training results also reveal diminishing returns at later stages of optimization. After approximately $80$k steps, both models improve more slowly. SALD maintains a clear FID advantage over the Baseline in this later regime: the gap remains stable on MultiGen ($2.58$ at both $10$k and $100$k for SANA 1.6B) and narrows but persists on Oxford 102 Flowers (from $5.74$ to $3.71$ for SANA 1.6B and from $4.81$ to $3.57$ for SANA 0.6B). Thus, extending training to $100$k steps does not eliminate the advantage of SALD. Instead, the results demonstrate that the gain introduced by SALD persists throughout the extended training process, while further optimization beyond $80$k steps provides only marginal improvements for either method. These observations support the use of $10$k steps as a practical training budget for the main experiments and confirm that the reported improvements are not due to premature termination of Baseline training.

%%%%%%%%%%%%%%%%%%%%%%%%%%%%%%%%%%%%%%%%%%%%%%%
\subsection{Additional Visual Samples}
%%%%%%%%%%%%%%%%%%%%%%%%%%%%%%%

We provide additional visual samples generated by SALD across multiple datasets, architectures, and resolutions to analyze its ability to preserve semantic consistency,  structural fidelity, and fine-grained visual details under diverse generation settings.

Figure~\ref{fig:random3} presents qualitative comparisons between
baseline finetuning and SALD across multiple architectures on the MAGICK and
DALL$\cdot$E~3~1M datasets at $1024\times1024$ resolution. Across all
settings, SALD produces sharper and more structurally coherent generations
with improved preservation of fine-grained semantic details. On MAGICK,
baseline models often generate unstable crown geometry, distorted facial
structure, inconsistent jewelry placement, and blurred ornamental details,
whereas SALD maintains cleaner symmetry, sharper textures, and more realistic
facial composition. Similarly, on DALL$\cdot$E~3~1M, baseline generations
exhibit inconsistent hand--object interactions, oversmoothed hair textures,
and weaker illumination consistency around the magical orb. In contrast,
SALD generates more stable anatomy, finer hair strands, improved lighting
coherence, and stronger object-boundary preservation. These improvements are
consistent across SANA and PixArt variants, demonstrating that SALD
effectively enhances high-frequency detail generation and semantic
consistency at high resolutions.

Figure~\ref{fig:baseline_comp_1024} further showcases diverse samples generated by SALD using
PixArt-$\Sigma$ at $1024\times1024$ resolution on the DALL$\cdot$E~3~1M
dataset. Additionally, Figure~\ref{fig:q3} further showcases diverse samples generated by SALD using Flux.2 klein at $1024\times1024$ resolution on the DALL$\cdot$E~3~1M
dataset. The generated images demonstrate strong semantic consistency,
realistic illumination, and preservation of fine-grained textures across a
wide variety of scenes and subjects. In portrait generations, SALD maintains
detailed facial structure, natural skin textures, and coherent lighting
interactions. For reflective and transparent objects, such as the crystal
sphere, the model accurately captures highlights, reflections, and depth
cues. Indoor and environmental scenes exhibit consistent global composition
and soft lighting transitions, while animal generations preserve
high-frequency fur textures and sharp eye details. These results highlight
the ability of SALD to generate visually coherent and high-fidelity images
across diverse semantic categories at high resolution.

Figure~\ref{fig:q2} presents qualitative generations produced by SALD across
diverse datasets, resolutions, and backbone architectures, demonstrating both
visual fidelity and semantic consistency. In the first row, SALD applied to
SANA 1.6B trained on DALL$\cdot$E~3~1M generates highly detailed urban neon
scenes with realistic illumination effects, accurate text rendering, and
strong spatial coherence at $1024\times1024$ resolution. The reflections on
wet cobblestone surfaces and the natural bokeh distribution highlight the
model's ability to preserve fine-grained lighting structure and texture
realism. The second row showcases generations on the CUB-200 dataset, where
SALD captures intricate feather structures, wing symmetry, and motion-aware
details in a snowy owl generated from an out-of-dataset prompt, while maintaining sharp focus on salient regions
such as the eyes and plumage patterns. Finally, the third row demonstrates
PixArt-$\Sigma$ generations on Oxford-102 Flowers, where SALD produces smooth
petal geometry, subtle color gradients, and realistic depth-of-field effects
with visually pleasing softness and fine texture retention. Across all
settings, the generated samples exhibit strong compositional stability,
high-frequency detail preservation, and improved prompt alignment, indicating
that SALD generalizes effectively across both natural and artistic image
domains.
%%%%%%%%%%%%%%%%%%%%%%%%%%%%%%%%%%%%%%%%%%%%%%%%%%%%%%%%%%%%%%%%%%%

%%%%%%%%%%%%%%%%%%%%%%%%%%%%%%%%
\vspace{-2mm}
\section{Limitations}\label{sec:limitations}
\vspace{-2mm}

SALD evaluates each training example through two noise levels using the same diffusion model: the lower-noise easy path provides an internal reference under \texttt{no\_grad}, whereas the higher-noise hard path supplies the training gradient. Consequently, SALD requires neither an external teacher nor additional trainable model parameters during training or inference. Nevertheless, the additional easy-path evaluation increases the per-step training cost by approximately $25\%$. Although our wall-clock-matched experiments (Table~\ref{tab:time_result}) show that the resulting improvement in optimization efficiency can offset this overhead, reducing the per-step training cost remains an important direction for future work. One possibility is to cache easy-path reference predictions from earlier checkpoints. However, such cached predictions would become stale as the model changes and would depart from the formulation considered in this work, where the reference is always computed with the current model parameters.

TAM stores one non-trainable scalar for each training sample. This memory cost is negligible for the dataset sizes considered in our experiments but grows linearly with the number of training examples. Moreover, TAM updates a sample's memory only when that sample is revisited, so its curriculum effect is weaker when each example is seen only a few times during training. For extremely large web-scale datasets, group-level, clustered, or compressed difficulty statistics could reduce this cost and increase the update frequency, although such approximations may sacrifice the precision of the per-sample curriculum.

Finally, our empirical evaluation is limited to fine-tuning pretrained text-to-image flow-matching models; we do not evaluate SALD for training from scratch. Our convergence analysis also holds only for a fixed TAM memory state (Remark~4). While we describe how the same easy--hard self-referencing principle can be extended to DDPM-style noise-prediction objectives, we do not experimentally validate that extension in this work. Evaluating SALD with noise-prediction models, in from-scratch training, and in other generative modalities, including video and 3D generation, remain directions for future research.

%%%%%%%%%%%%%%%%%%%%%%%%%%%%%%%%%%%%%%%%%%%%%%%%%%%%%%%%%%

%%%%%%%%%%%%%%%%%%%%%%%%%%%%%%%
\begin{figure}[H]
  \vspace{-2mm}
  \centering
  \includegraphics[
    width=0.99\textwidth,
    trim=0 10mm 0 10mm,
    clip
]{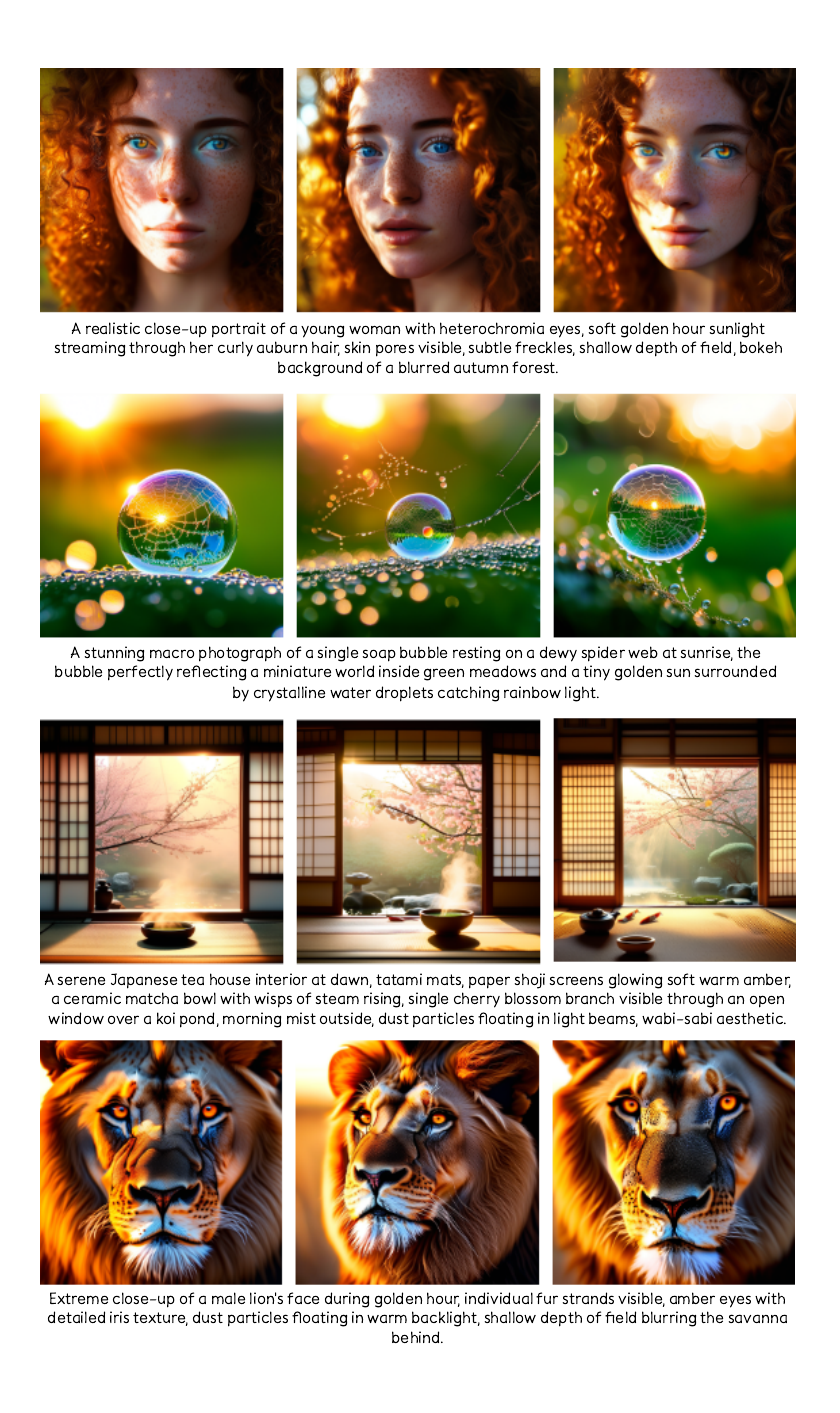}

  % \vspace{-10mm}
  \caption{Qualitative samples generated by SALD using PixArt-$\Sigma$ at $1024\!\times\!1024$ resolution on the DALL$\cdot$E~3~1M dataset.}
  \label{fig:q1}
\end{figure}
%%%%%%%%%%%%%%%%%%%%%%%%%%%%%%%
\begin{figure}[H]
  \centering
  \includegraphics[
    width=0.99\textwidth,
    trim=0 10mm 0 10mm,
    clip
]{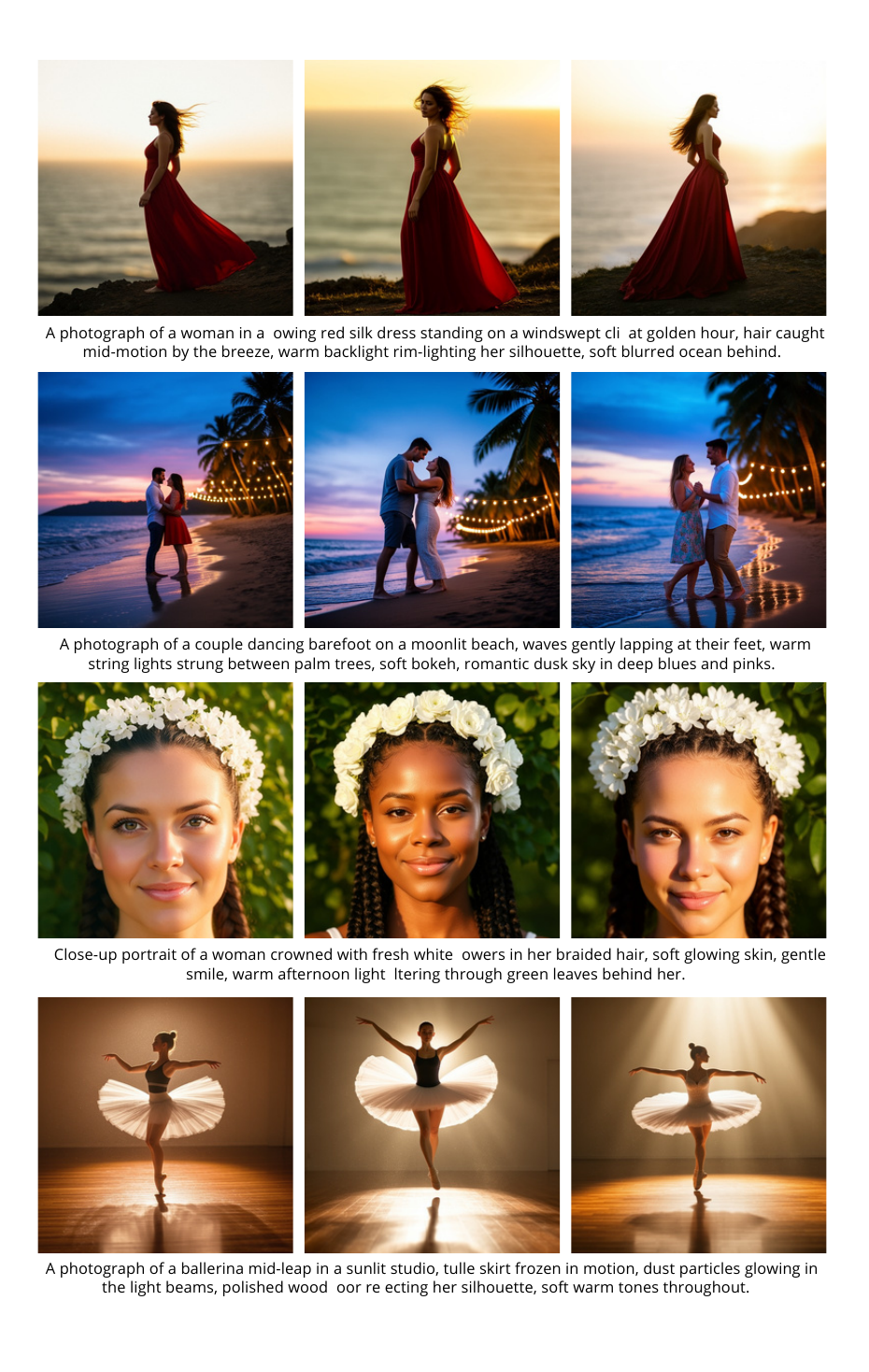}
  \caption{Qualitative samples generated by SALD using Flux.2 klein Base at $1024\!\times\!1024$ resolution on the DALL$\cdot$E~3~1M dataset.}
  \label{fig:q3}
\end{figure}

%%%%%%%%%%%%%%%%%%%%%%%%%%%%%%%
\begin{figure}[H]
    \centering
    \includegraphics[width=\textwidth]{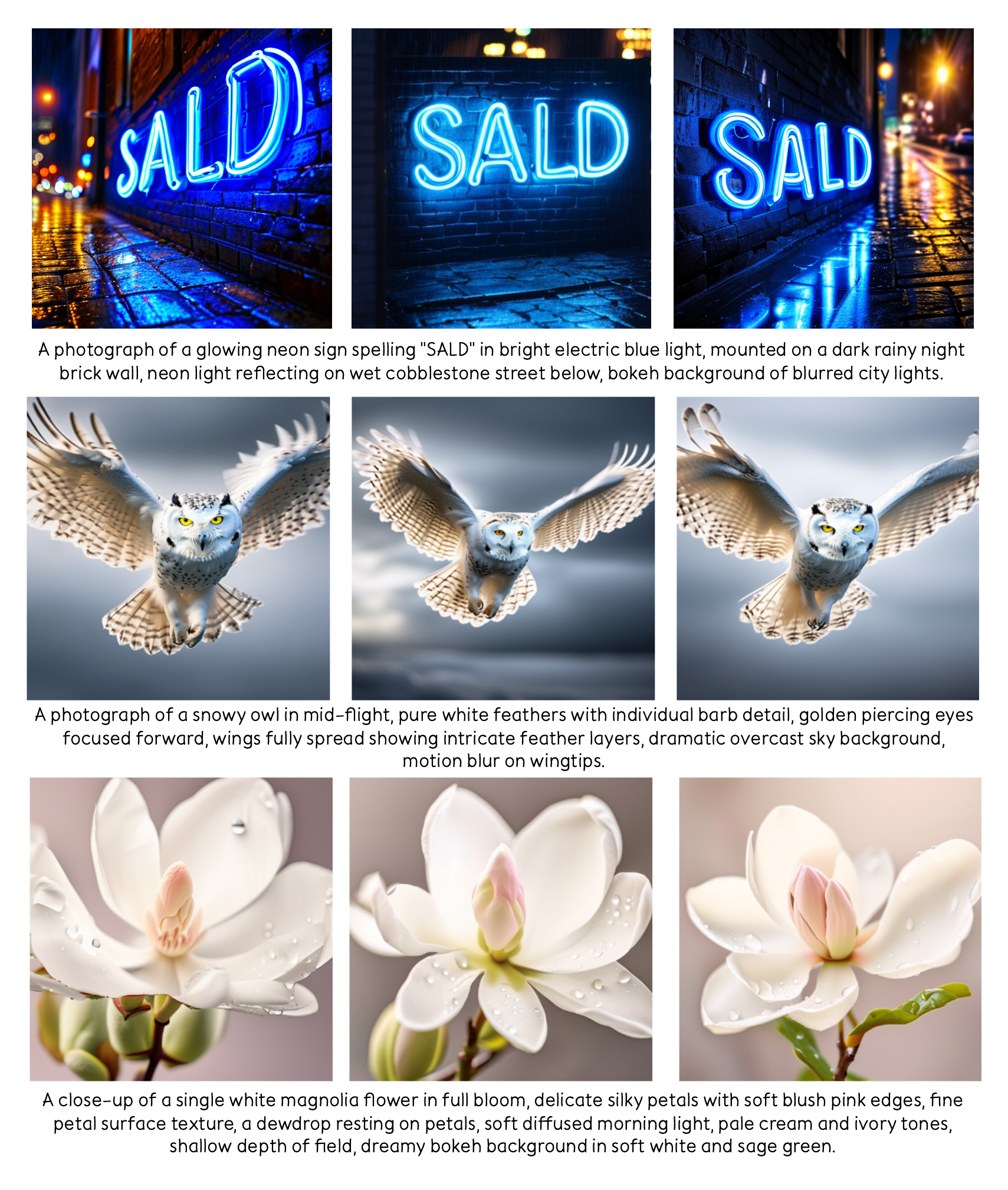}
    \caption{Qualitative samples generated by SALD across multiple datasets and architectures. Row~1: SANA~1.6B at $1024\!\times\!1024$ resolution trained on DALL$\cdot$E~3~1M. Row~2: SANA~1.6B generations on CUB-200. Row~3: PixArt-$\Sigma$ generations on Oxford-102 Flowers.}
  \label{fig:q2}
\end{figure}
%%%%%%%%%%%%%%%%%%%%%%%%%%%%%%%

% \clearpage
% \newpage
% \input{checklist.tex}

\end{document}